\documentclass{article}

\pdfoutput=1

\def\neuripsfinal{0}
\def\arxiv{1} %

\ifnum\neuripsfinal=1
\usepackage[final]{neurips_2021}
\fi

\usepackage[utf8]{inputenc} %
\usepackage[T1]{fontenc}    %
\usepackage[colorlinks]{hyperref}       %
\usepackage{url}            %
\usepackage{booktabs}       %
\usepackage{amsfonts}       %
\usepackage{nicefrac}       %
\usepackage{microtype}      %
\usepackage{xcolor}         %

\usepackage{graphicx}
\usepackage{subfigure}
\usepackage{hyperref}
\usepackage{fullpage}
\usepackage{amssymb,amsmath,amsthm}
\usepackage{empheq}
\usepackage{algorithm, algorithmic}

\ifnum\neuripsfinal=1
\usepackage{natbib}
\else
\usepackage[numbers]{natbib}
\fi

\ifnum\neuripsfinal=0
\allowdisplaybreaks
\fi

\usepackage{wrapfig}

\newcommand\algorithmicprocedure{\textbf{Procedure}}

\makeatletter
\newcommand\PROCEDURE[3][default]{%
  \ALC@it
  \algorithmicprocedure\ \textsc{#2}(#3)%
  \ALC@com{#1}%
  \begin{ALC@prc}%
}
\newcommand\ENDPROCEDURE{%
  \end{ALC@prc}%
}
\newenvironment{ALC@prc}{\begin{ALC@g}}{\end{ALC@g}}
\makeatother

\widowpenalty=0
\clubpenalty=0

\setlength{\skip\footins}{12pt}

\newcommand\blfootnote[1]{%
  \begingroup
  \renewcommand\thefootnote{}\footnote{#1}%
  \addtocounter{footnote}{-1}%
  \endgroup
}

\newcommand{\ignore}[1]{}

\def\bold0{\mathbf{0}}

\def\epsilon{\varepsilon}

\newcommand{\defeq}{\triangleq}

\newtheorem{theorem}{Theorem}[section]

\newtheorem{proposition}[theorem]{Proposition}
\newtheorem{lemma}[theorem]{Lemma}
\newtheorem{corollary}[theorem]{Corollary}

\newtheorem{definition}[theorem]{Definition}

\newcommand{\newreptheorem}[2]{%
\newenvironment{rep#1}[1]{%
 \def\rep@title{#2 \ref{##1}}%
 \begin{rep@theorem}}%
 {\end{rep@theorem}}}

\newreptheorem{theorem}{Theorem}
\newreptheorem{lemma}{Lemma}
\newreptheorem{proposition}{Proposition}
\newreptheorem{claim}{Claim}
\newreptheorem{corollary}{Corollary}
\newreptheorem{mainlemma}{Main Lemma}

\newcommand{\namedref}[2]{\mbox{\hyperref[#2]{#1~\ref*{#2}}}}

\newcommand{\figurerefb}[2]{\mbox{\hyperref[#1]{Figure~\ref*{#1}#2}}}

\newcommand{\equationref}[1]{\mbox{\hyperref[#1]{(\ref*{#1})}}}
\renewcommand{\eqref}{\equationref}

\numberwithin{equation}{section}

\DeclareMathOperator{\arcsinh}{arcsinh}

\title{The Skellam Mechanism for\\Differentially Private Federated Learning}

\ifnum\neuripsfinal=1

\author{%
  Naman Agarwal\textsuperscript{\dag} \qquad Peter Kairouz\textsuperscript{\dag} \qquad Ziyu Liu\textsuperscript{\ddag}\thanks{Work done while at Google Research. \textsuperscript{\S}Alphabetical authorship.} \\ 
  \textsuperscript{\dag}Google Research \qquad \textsuperscript{\ddag}Carnegie Mellon University\\
  \texttt{\{namanagarwal, kairouz\}@google.com, ziyuliu@cs.cmu.edu} \\
}

\else

\author{%
  Naman Agarwal\thanks{Google Research ~\dotfill~\texttt{namanagarwal@google.com}} \qquad Peter Kairouz\thanks{Google Research ~\dotfill~\texttt{kairouz@google.com}} \qquad Ziyu Liu\thanks{Carnegie Mellon University. Work done while at Google Research.~\dotfill~\texttt{ziyuliu@cs.cmu.edu}}
}

\fi

\begin{document}

\ifnum\arxiv=1
\date{\vspace{-5ex}}
\fi

\maketitle

\ifnum\neuripsfinal=0
\blfootnote{Alphabetical authorship. Published at the 35th Conference on Neural Information Processing Systems (NeurIPS 2021).}
\fi

\begin{abstract}
We introduce the multi-dimensional Skellam mechanism, a discrete differential privacy mechanism based on the difference of two independent Poisson random variables. To quantify its privacy guarantees, we analyze the privacy loss distribution via a numerical evaluation and provide a sharp bound on the Rényi divergence between two shifted Skellam distributions. While useful in both centralized and distributed privacy applications, we investigate how it can be applied in the context of federated learning with secure aggregation under communication constraints. Our theoretical findings and extensive experimental evaluations demonstrate that the Skellam mechanism provides the same privacy-accuracy trade-offs as the continuous Gaussian mechanism, even when the precision is low. More importantly, Skellam is closed under summation and sampling from it only requires sampling from a Poisson distribution -- an efficient routine that ships with all machine learning and data analysis software packages. These features, along with its discrete nature and competitive privacy-accuracy trade-offs, make it an attractive practical alternative to the newly introduced discrete Gaussian mechanism. 
\end{abstract}

\vspace{-0.5em}
\section{Introduction} \label{sec:introduction}
The Gaussian mechanism is the workhorse for a multitude of differentially private learning algorithms \cite{song2013stochastic, bassily2014private, abadi2016deep}. While simple enough for mathematical reasoning and privacy accounting analyses, its continuous nature presents a number of challenges in practice. For example, it cannot be exactly represented on finite computers, making it prone to numerical errors that can break its privacy guarantees \cite{mironov2012significance}. Moreover, it cannot be used in distributed learning settings with cryptographic multi-party computation primitives involving modular arithmetic, such as secure aggregation \cite{secagg, bell2020secagg}. To address these shortcomings, the binomial and (distributed) discrete Gaussian mechanisms were recently introduced \cite{DworkKMMN06, agarwal2018cpsgd, CanonneKS20, ddg}. Unfortunately, both have their own drawbacks: the privacy loss for the binomial mechanism can be infinite with a non-zero probability, and the discrete Gaussian: (a) is not closed under summation (i.e. sum of discrete Gaussians is not a discrete Gaussian), complicating analysis in distributed settings and leading to a performance worse than continuous Gaussian in the highly distributed, low-noise regime~\cite{ddg}; (b) requires a sampling algorithm that is not shipped with mainstream machine learning or data analysis software packages, making it difficult for engineers to use it in production settings (na\"ive implementations may lead to catastrophic privacy errors).

\textbf{Our contributions}\hspace{0.5em} To overcome these limitations, we introduce and analyze the multi-dimensional Skellam mechanism, a mechanism based on adding noise distributed according to the difference of two independent Poisson random variables. The Skellam noise is closed under summation (i.e. sums of Skellam random variables is again Skellam distributed) and can be sampled from easily -- efficient Poisson samplers are widely available in numerical software packages. Being discrete in nature also means that it can mesh well cryptographic protocols and can lead to communication savings.

To analyze the privacy guarantees of the Skellam mechanism and compare it with other mechanisms, we provide a numerical evaluation of the privacy loss random variable and prove a sharp bound on the R\'enyi divergence between two shifted Skellam distributions. Our careful analysis shows that for a multi-dimensional query function with $\ell_1$ sensitivity $\Delta_1$ and $\ell_2$ sensitivity $\Delta_2$, the Skellam mechanism with variance $\mu$ achieves $(\alpha, \varepsilon(\alpha))$ R\'enyi differential privacy (RDP) \cite{mironov2017renyi} for $\varepsilon(\alpha) \leq \frac{\alpha \Delta_{2}}{2 \mu}+\min \left(\frac{(2\alpha - 1) \Delta_{2}+6 \Delta_1}{4 \mu^2}, \frac{ 3\Delta_1}{2  \mu}\right)$ (see Theorem \ref{thm:mainSK}). This implies that the RDP guarantees are at most $1 + O\left(1/{\mu}\right)$ times worse than those of the Gaussian mechanism.

To analyze the performance of the Skellam mechanism in practice, we consider a differentially private and communication constrained federated learning (FL) setting \cite{kairouz2019advances} where the noise is added locally to the $d$-dimensional discretized client updates that are then summed securely via a cryptographic protocol, such as secure aggregation (SecAgg) \cite{bell2020secagg, secagg}.
We provide an end-to-end algorithm that appropriately discretizes the data and applies the Skellam mechanism along with modular arithmetic to bound the range of the data and communication costs before applying SecAgg. 

We show on distributed mean estimation and two benchmark FL datasets, Federated EMNIST~\cite{caldas2018leaf} and Stack Overflow~\cite{tffauthors2019}, that our method can match the performance of the continuous Gaussian baseline under tight privacy and communication budgets, despite using generic RDP amplification via sampling~\cite{zhu2019poission} for our approach and the precise RDP analysis for the subsampled Gaussian mechanism~\cite{mironov2019r}.
Our method is implemented in TensorFlow Privacy~\cite{tf-privacy} and TensorFlow Federated~\cite{ingerman2019tff} and will be open-sourced.\footnote{\url{https://github.com/google-research/federated/tree/master/distributed_dp}} While we mostly focus on FL applications, the Skellam mechanism can also be applied in other contexts of learning and analytics, including centralized settings.

\textbf{Related work}\hspace{0.5em} 
The Skellam mechanism was first introduced in the context of computational differential privacy from lattice-based cryptography~\cite{valovich2017computational} and private Bayesian inference~\cite{schein2019locally}. However, the privacy analyses in the prior work do not readily extend to the multi-dimensional case, and they give direct bounds for pure or approximate DP which makes only advanced composition theorems~\cite{kairouz2015composition, dwork2010boosting} directly applicable in learning settings where the mechanism is applied many times. For example, the guarantees from~\cite{valovich2017computational} lead to poor accuracy-privacy trade-offs as demonstrated in Fig.~\ref{fig:accounting-composition}. Moreover, we show in Section \ref{sec:skellam-pld} that extending the direct privacy analysis to the multi-dimensional setting is non-trivial because the worst-case neighboring dataset pair is unknown in this case. 
For these reasons, our tight privacy analysis via a sharp RDP bound makes the Skellam mechanism practical for learning applications for the first time.
These guarantees (almost) match those of the Gaussian mechanism and allow us to use generic RDP amplification via subsampling methods \cite{zhu2019poission}.

The closest mechanisms to Skellam are the binomial \cite{agarwal2018cpsgd, DworkKMMN06} and the discrete Gaussian mechanisms \cite{CanonneKS20, ddg}.
The binomial mechanism can (asymptotically) match the continuous Gaussian mechanism (when properly scaled). However, it does not achieve R\'enyi or zero-concentrated DP  \cite{mironov2017renyi, BunS16} and has a privacy loss that can be infinite with a non-zero probability, leading to catastrophic privacy failures. The discrete Gaussian mechanism yields R\'enyi DP and can be applied to distributed settings~\cite{ddg}, but it requires a sampling algorithm that is not yet available in data analysis software packages despite being explored in the lattice-based cryptography community (e.g.,~\cite{roy2013high, dwarakanath2014sampling, rossi2019simple}). The discrete Gaussian is also not closed under summation and the divergence can be large in highly distributed low-noise settings (e.g. quantile estimation~\cite{andrew2019differentially} and federated analytics~\cite{fedanalytics}), which causes privacy degradation. See the end of Section~\ref{sec:applying-skellam} for more discussion.

\section{Preliminaries} \label{sec:prelim}
We begin by providing a formal definition for $(\epsilon, \delta)$-differential privacy (DP) \cite{DworkMNS06}. 
\begin{definition}[Differential Privacy]
For $\epsilon, \delta \geq 0$, a randomized mechanism ${M}$ satisfies $(\epsilon, \delta)$-DP if for all neighboring datasets $D,D'$ and all $\mathcal{S}$ in the range of $M$, we have that 
\[P\left({M}(D) \in \mathcal{S}\right) \leq e^{\epsilon} P\left({M}(D') \in \mathcal{S}\right) + \delta,\]
where $D$ and $D'$ are neighboring pairs if they can be obtained from each other by adding or removing all the records that belong to a particular user. 
\end{definition}
In our experiments we consider \emph{user-level differential privacy} -- i.e., $D$ and $D'$ are neighboring pairs if one of them can be obtained from the other by adding or removing \emph{all} the records associated with a single user \citep{mcmahan2018learning}. This is stronger than the commonly-used notion of item level privacy where, if a user contributes multiple records, only the addition or removal of one record is protected. 

We also make use of R\'enyi differential privacy (RDP) \cite{mironov2017renyi} which allows for tight privacy accounting.
\begin{definition}[Rényi Differential Privacy] \label{def:rdp}
A mechanism ${M}$ satisfies $(\alpha,\epsilon)$-RDP if for any two neighboring datasets $D,D'$, we have that $D_{\alpha}({M}(D),{M}(D')) \leq \epsilon$ 
where $D_{\alpha}(P,Q)$ is the R\'enyi divergence between $P$ and $Q$ and is given by
\[D_{\alpha}(P,Q) \triangleq \frac{1}{\alpha - 1} \log\left(\mathbb{E}_{x \sim Q} \left[\left(\frac{P(x)}{Q(x)}\right)^{\alpha} \right]\right) =  \frac{1}{\alpha - 1} \log\left(\mathbb{E}_{x \sim P} \left[\left(\frac{P(x)}{Q(x)}\right)^{\alpha-1} \right]\right).\]
\end{definition}
A closely related privacy notion is zero-concentrated DP (zCDP)~\cite{dwork2016concentrated, BunS16}. In fact, $\frac12\varepsilon^2$-zCDP is equivalent to simultaneously satisfying an infinite family of RDP guarantees, namely $(\alpha,\frac12\varepsilon^2\alpha)$-R\'enyi differential privacy for all $\alpha \in (1,\infty)$.
The following conversion lemma from \cite{BunS16,CanonneKS20,AsoodehLCKS20} relates RDP to ($\epsilon,\delta$)-DP.
\begin{lemma}
If $M$ satisfies $(\alpha,\epsilon)$-RDP, then, for any $\delta>0$, $M$ satisfies $(\varepsilon_\text{DP}(\delta),\delta)$-DP, where
\[\epsilon_\text{DP}(\delta) = \inf_{\alpha>1} \epsilon  + \frac{\log(1/\alpha\delta)}{\alpha-1} + \log(1-1/\alpha).\]
\end{lemma}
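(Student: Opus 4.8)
The plan is to prove the per-$\alpha$ version of the statement and then optimize: fix $\alpha>1$ and the corresponding RDP bound $\epsilon$ (so $D_\alpha(M(D)\|M(D'))\le\epsilon$ for every neighbouring pair $D,D'$), show that $M$ is then $(\hat\epsilon,\delta)$-DP with $\hat\epsilon=\epsilon+\frac{\log(1/\alpha\delta)}{\alpha-1}+\log(1-1/\alpha)$, and conclude by taking the infimum over $\alpha>1$ (a routine limiting argument covers the case where the infimum is not attained). Throughout I write $p,q$ for densities of $P=M(D)$, $Q=M(D')$ with respect to a common dominating measure; since $D_\alpha(P\|Q)\le\epsilon<\infty$ forces $P\ll Q$ for $\alpha>1$, the set $\{q=0\}$ is $P$-null and can be ignored.

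First I would reduce $(\hat\epsilon,\delta)$-DP to a single scalar inequality. The condition $P(S)\le e^{\hat\epsilon}Q(S)+\delta$ for all measurable $S$ — together with the reverse inequality, which follows since the RDP hypothesis is stated for \emph{all} neighbouring pairs and neighbouring is symmetric — is equivalent to the hockey-stick bound $\int(p-e^{\hat\epsilon}q)_+\,dx\le\delta$, the supremum over $S$ being attained on $A:=\{x:p(x)>e^{\hat\epsilon}q(x)\}$. So it suffices to bound $E:=\int_A(p-e^{\hat\epsilon}q)\,dx$.

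The key step is a pointwise estimate that, unlike the crude bound $E\le P(A)$ underlying the basic Mironov conversion, retains the subtracted mass $e^{\hat\epsilon}q$. Writing $t=q/p\in(0,e^{-\hat\epsilon})$ on $A$, I factor the integrand as $p\,(1-e^{\hat\epsilon}t)$ and seek the smallest constant $c$ with $1-e^{\hat\epsilon}t\le c\,t^{-(\alpha-1)}$ for all $t\in(0,e^{-\hat\epsilon})$; maximizing $t\mapsto(1-e^{\hat\epsilon}t)\,t^{\alpha-1}$ over this interval (interior maximizer $t^\star=(1-1/\alpha)e^{-\hat\epsilon}$, which indeed lies in $(0,e^{-\hat\epsilon})$ since $1-1/\alpha\in(0,1)$) gives $c=\frac1\alpha(1-1/\alpha)^{\alpha-1}e^{-(\alpha-1)\hat\epsilon}$. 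Substituting,
\[
E\;\le\;c\int_A p\,(p/q)^{\alpha-1}\,dx\;\le\;c\int p^{\alpha}q^{1-\alpha}\,dx\;=\;c\,e^{(\alpha-1)D_\alpha(P\|Q)}\;\le\;c\,e^{(\alpha-1)\epsilon}.
\]
Requiring $\frac1\alpha(1-1/\alpha)^{\alpha-1}e^{-(\alpha-1)(\hat\epsilon-\epsilon)}\le\delta$ and solving for $\hat\epsilon$ yields exactly $\hat\epsilon\ge\epsilon+\frac{\log(1/\alpha\delta)}{\alpha-1}+\log(1-1/\alpha)$; taking the infimum over $\alpha>1$ gives $\epsilon_{\text{DP}}(\delta)$.

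The main obstacle is pinning down the sharp constant in the pointwise step and recognizing that this is precisely where the improvement over the $\epsilon+\log(1/\delta)/(\alpha-1)$ bound comes from — everything else is bookkeeping, plus the measure-theoretic check that $P\ll Q$ so that the density manipulations are legitimate. As sanity checks one notes that $\log(1-1/\alpha)<0$ and $-\log\alpha/(\alpha-1)<0$ for $\alpha>1$, so the bound is never worse than the basic conversion, and that both correction terms vanish as $\alpha\to\infty$, recovering the pure-DP limit.
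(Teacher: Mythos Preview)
Your argument is correct: the pointwise optimization of $(1-e^{\hat\epsilon}t)t^{\alpha-1}$ gives exactly the constant $c=\frac{1}{\alpha}(1-1/\alpha)^{\alpha-1}e^{-(\alpha-1)\hat\epsilon}$, and chaining this with the R\'enyi bound and solving for $\hat\epsilon$ reproduces the stated expression. The paper itself does not prove this lemma --- it is imported verbatim from the cited references \cite{BunS16,CanonneKS20,AsoodehLCKS20} --- so there is no ``paper's own proof'' to compare against. Your derivation is essentially the one given in \cite{CanonneKS20} (their conversion proposition), where the same hockey-stick reduction and the same scalar optimization appear; so you have independently reconstructed the intended argument.
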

For any query function $f$, we define the $\Delta_p$ sensitivity as 
$\max_{D,D'} \|f(D) - f(D')\|_p,$ where $D$ and $D'$ are neighboring pairs differing by adding or removing all the records from a particular user. 
We also include the RDP guarantees of the discrete Gaussian mechanism (same RDP guarantees as the continuous Gaussian mechanism) to which we compare our method.
\begin{definition}[The Discrete Gaussian Mechanism~\cite{CanonneKS20}]
Given an integer-valued query $f(D) \in \mathbb{Z}^d$ and noise variance $\mu$, the Discrete Gaussian (DGaussian) Mechanism is given by
\[f(D) + Z, \text{ where } Z \sim  \mathcal{N}_{\mathbb{Z}}(0,\mu),\]
and $\mathcal{N}_{\mathbb{Z}}(0,\mu)$ denotes the discrete Gaussian distribution defined in Equation (1) of~\cite{CanonneKS20}. 
The discrete Gaussian mechanism achieves $(\alpha, \frac{\alpha \Delta_2^2}{ 2 \mu})$-R\'enyi DP.
\end{definition}

\section{The Skellam Mechanism} \label{sec:skellam}
We begin by presenting the definition of the Skellam distribution, which is the basis of the Skellam Mechanism for releasing integer ranged multi-dimensional queries. 
\begin{definition}[Skellam Distribution]
The multidimensional Skellam distribution $\operatorname{Sk}_{\Delta,\mu}$ over $\mathbb{Z}^d$ with mean $\Delta \in \mathbb{Z}^d$ and variance $\mu$ is given with each coordinate $X_i$ distributed independently as  
\[ X_i \sim \operatorname{Sk}_{\Delta_i,\mu} \text{ with } P(X_i = k) = e^{-\mu}I_{k-\Delta_i}(\mu),\]
\end{definition}
for $k \in \mathbb{Z}$. Here, $I_{\nu}(x)$ is the modified Bessel function of the first kind. A key property of Skellam random variables which motivates their use in DP is that they are closed under summation, i.e. let $X_1 \sim \operatorname{Sk}_{\Delta_1,\mu_1}$ and $X_2 \sim \operatorname{Sk}_{\Delta_2,\mu_2}$ then $X_1 + X_2 \sim \operatorname{Sk}_{\Delta_1+\Delta_2,\mu_1+\mu_2}.$ This follows from the fact that a Skellam random variable $X$ can be obtained by taking the difference between two independent Poisson random variables with means $\mu$.\footnote{We only consider the symmetric version of Skellam, but it is often more generally defined as the difference of independent Poisson random variables with different variances.}
We are now ready to introduce the Skellam Mechanism.
\begin{definition}[The Skellam Mechanism]
Given an integer-valued query $f(D) \in \mathbb{Z}^d$, we define the Skellam Mechanism as
\[\operatorname{Sk}_{0,\mu}(f(D)) = f(D) + Z, \text{ where } Z \sim \operatorname{Sk}_{0,\mu},\]
and the total $\ell_2$ error of the mechanism is bounded by
$\mathbb{E}\left[\|\operatorname{Sk}_{0,\mu}(f(D)) - f(D)\|_2^2\right] \leq d\mu$.
\end{definition}
The Skellam mechanism was first introduced in~\cite{valovich2017computational} for the scalar case. 
As our goal is to apply the Skellam mechanism in the learning context, we have to address the following challenges. (1)~\textit{Tight privacy compositions:} Learning algorithms are iterative in nature and require the application of the DP mechanism many times (often $> 1000$). The current direct approximate DP analysis in~\cite{valovich2017computational} can be combined with advanced composition (AC) theorems~\cite{kairouz2015composition, dwork2010boosting} but that leads to poor privacy-accuracy trade-offs (see Fig.~\ref{fig:accounting-composition}).
\ifnum\arxiv=1
\begin{wrapfigure}{Tr}{0.55\textwidth}
    \centering
    \includegraphics[width=0.55\textwidth,bb=0 0 375 188]{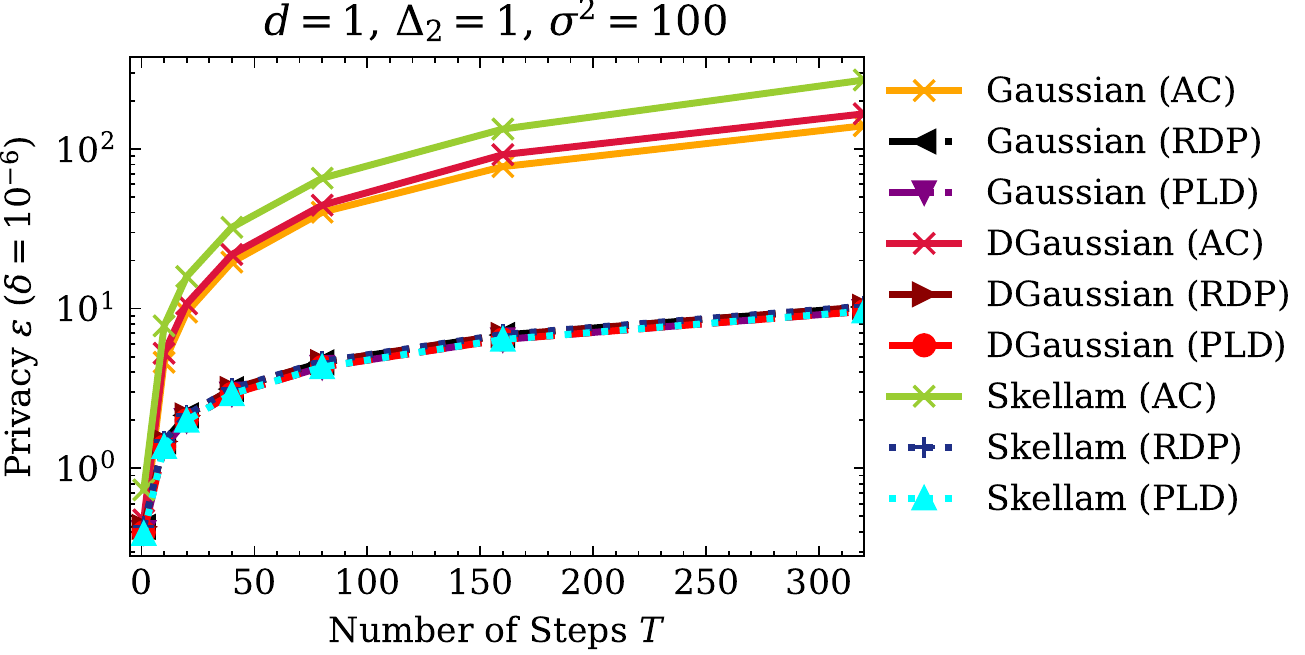}
    \vspace{-15pt}
    \caption{Comparing privacy compositions across various mechanisms and accounting methods.}
    \vspace{-28pt}
    \label{fig:accounting-composition}
\end{wrapfigure}
\else
\begin{wrapfigure}{Tr}{0.55\textwidth}
    \centering
    \raisebox{0pt}[\dimexpr\height-1.2\baselineskip\relax]{%
    \includegraphics[width=0.55\textwidth]{images/skellam-composition-wide.pdf}
    }%
    \vspace{-20pt}
    \caption{Comparing privacy compositions across various mechanisms and accounting methods.}
    \vspace{-28pt}
    \label{fig:accounting-composition}
\end{wrapfigure}
\fi
(2)~\textit{Privacy analysis for multi-dimensional queries:} In learning algorithms, the differentially private queries are multi-dimensional (where the dimension equals the number of model parameters, typically $\ge10^6$). Using composition theorems lead to poor accuracy-privacy trade-offs and a direct extension of approximate DP guarantee \cite{valovich2017computational} for the multi-dimensional case leads to a strong dependence on $\ell_1$ sensitivity which is prohibitively large in high dimensions.
(3)~\textit{Data discretization:}  The gradients are naturally continuous vectors but we would like to apply an integer based mechanism. This requires properly discretizing the data while making sure that the norm of the vectors (sensitivity of the query) is preserved.  
We will tackle challenges (1) and (2) in the remainder of this section and leave (3) for the next section.
\subsection{Tight Numerical Accounting via Privacy Loss Distributions}
\label{sec:skellam-pld}

We begin by defining the notion of privacy loss distributions (PLDs). 
\begin{definition}[Privacy Loss Distribution]
\label{def-pld}
For a multi-dimensional discrete privacy mechanism $M$ and neighboring datasets $D, D'$, for any $x \in \mathbb{Z}^d$, we define $f(x) = \log \left(\frac{P(M(D)= x)}{P(M(D')=x)}\right)$. The privacy loss random variable of $M$ at ($D, D'$) is $Z_{D,D'} = f(M(D))$ \cite{dwork2010boosting}. The privacy loss distribution (PLD) of $M$, denoted by $\operatorname{PLD}_{D,D'}$, is the distribution of $Z_{D,D'}$. 
\end{definition}
The PLD of a mechanism $M$ can be used to characterize its $(\epsilon, \delta)$-DP guarantees. 
\begin{lemma} \label{lem:eq-hockey-stick-div}
A mechanism $M$ is $(\epsilon, \delta)$-DP if and only if $\delta \geq \mathbb{E}_{Z \sim \operatorname{PLD}_{D,D'}}\left[1 - e^{\epsilon - Z}\right]_{+}$ for all neighboring datasets $D,D'$ where $[x]_{+} = \max(0,x)$.
\end{lemma}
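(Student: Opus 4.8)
The plan is to reduce $(\epsilon,\delta)$-DP to a statement about a single worst-case event, and then recognize that event's excess probability mass as exactly the claimed expectation. I would fix an ordered pair of neighboring datasets $D,D'$ (since the neighboring relation is symmetric, quantifying over all ordered pairs captures both directions of the DP inequality), abbreviate $P(x)=P(M(D)=x)$ and $Q(x)=P(M(D')=x)$, and recall $f(x)=\log(P(x)/Q(x))$ with the convention $f(x)=+\infty$ when $Q(x)=0<P(x)$.

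\textbf{Step 1 (worst-case event).} For an event $S\subseteq\mathbb{Z}^d$ set $\Phi(S)=P(M(D)\in S)-e^\epsilon P(M(D')\in S)=\sum_{x\in S}(P(x)-e^\epsilon Q(x))$. Because $\Phi$ is additive over $x$, it is maximized by including exactly those $x$ whose marginal contribution $P(x)-e^\epsilon Q(x)$ is positive, i.e. by $S^\star=\{x:P(x)>e^\epsilon Q(x)\}=\{x:f(x)>\epsilon\}$. Hence $M$ is $(\epsilon,\delta)$-DP if and only if $\Phi(S^\star)\le\delta$ for every neighboring pair.

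\textbf{Step 2 (rewrite $\Phi(S^\star)$ via the PLD).} For $x$ with $P(x)>0$ we have $P(x)-e^\epsilon Q(x)=P(x)\bigl(1-e^{\epsilon-f(x)}\bigr)$, and the factor $1-e^{\epsilon-f(x)}$ is positive precisely when $f(x)>\epsilon$. Therefore summing over $S^\star$ equals summing the positive part over all $x$:
\[\Phi(S^\star)=\sum_{x:f(x)>\epsilon}P(x)\bigl(1-e^{\epsilon-f(x)}\bigr)=\sum_x P(x)\bigl[1-e^{\epsilon-f(x)}\bigr]_+=\mathbb{E}_{x\sim M(D)}\bigl[1-e^{\epsilon-f(x)}\bigr]_+,\]
and since $Z_{D,D'}=f(M(D))$ by definition, this equals $\mathbb{E}_{Z\sim\operatorname{PLD}_{D,D'}}[1-e^{\epsilon-Z}]_+$. \textbf{Step 3 (both directions).} If $M$ is $(\epsilon,\delta)$-DP, applying the definition with $S=S^\star$ gives $\delta\ge\Phi(S^\star)=\mathbb{E}_Z[1-e^{\epsilon-Z}]_+$. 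Conversely, if $\delta\ge\mathbb{E}_Z[1-e^{\epsilon-Z}]_+=\Phi(S^\star)$ for all neighboring pairs, then for every event $S$ we get $P(M(D)\in S)-e^\epsilon P(M(D')\in S)=\Phi(S)\le\Phi(S^\star)\le\delta$, which is exactly $(\epsilon,\delta)$-DP.

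The argument is short, so the only thing requiring care is bookkeeping around degenerate likelihood ratios: points with $Q(x)=0<P(x)$ have $f(x)=+\infty$ and contribute $[1-e^{\epsilon-f(x)}]_+=1$ times their $P$-mass consistently on both sides, while points with $P(x)=0$ contribute nothing to either side; and one should state the optimality of $S^\star$ for a possibly countably infinite discrete domain, where additivity of $\Phi$ still forces the greedy choice to be optimal. Since $M$ is assumed discrete, no further measure-theoretic subtlety arises, and the main ``obstacle'' is simply getting these conventions right rather than any substantive difficulty.
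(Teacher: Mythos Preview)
Your argument is correct: the reduction to the worst-case event $S^\star=\{x:P(x)>e^\epsilon Q(x)\}$ followed by the identity $P(x)-e^\epsilon Q(x)=P(x)[1-e^{\epsilon-f(x)}]_+$ is exactly the standard derivation of the hockey-stick divergence characterization of $(\epsilon,\delta)$-DP, and your handling of the degenerate likelihood-ratio cases is fine for the discrete setting assumed here.

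The paper itself does not prove this lemma; it simply states it as a known fact and cites the PLD literature (e.g.\ \cite{koskela2020computing,meiser2018tight,discrete-alg-pld}) before moving on to apply it numerically. So there is nothing to compare against---your write-up supplies a proof where the paper relies on folklore. If anything, you could shorten Step~1 by noting that $\sup_S\Phi(S)=\sum_x[P(x)-e^\epsilon Q(x)]_+$ directly, but what you have is already clean.
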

When a mechanism $M$ is applied $T$ times on a dataset, the overall PLD of the composed mechanism at $(D, D')$ is the $T$-fold convolution of $\operatorname{PLD}_{D,D'}$~\cite{dwork2010boosting}. Since discrete convolutions can be computed efficiently using fast Fourier transforms (FFTs) and the expectation in Lemma~\ref{lem:eq-hockey-stick-div} can be numerically approximated, PLDs are attractive for tight numerical accounting~\cite{koskela2020computing, meiser2018tight, discrete-alg-pld}. 
Applying the above to the Skellam mechanism, a direct calculation shows that with $X_i$ are i.i.d. according to $\operatorname{Sk}_{0,\mu}$,
\[Z_{D,D'} = \sum_{i=1}^{d} \log \left( \frac{I_{X_i - f(D)_i}(\mu)}{I_{X_i - f(D')_i}(\mu)} \right).\]
When $d=1$, it suffices to look at $Z = \log ( I_{X - \Delta}(\mu)/I_{X}(\mu) )$, where $\Delta = \max_{D,D'} |f(D) - f(D')|$ and $X \sim \operatorname{Sk}_{0,\mu}$. Since $X$ has a discrete and symmetric probability distribution and the $\log$ function is monotonic, the distribution of $Z$ can be easily characterized. 
This gives us a tight numerical accountant for the Skellam mechanism in the scalar case, which we use to compare it with both the Gaussian and discrete Gaussian mechanisms. Fig.~\ref{fig:accounting-composition} shows this comparison, highlighting the competitiveness of the Skellam mechanism and the problem of combining the direct analysis of~\cite{valovich2017computational} with advanced composition (AC) theorems.
When $d > 1$, there are combinatorially many $Z_{D,D'}$'s that need to be considered, even when the $\ell_2$ sensitivity of $f(D)$ is bounded. The discrete Gaussian mechanism faces a similar issue (see Theorem 15 of \cite{CanonneKS20}). To provide a tight privacy analysis in the multi-dimensional case, we prove a bound on the RDP guarantees of the Skellam mechanism in the next subsection. Fig.~\ref{fig:accounting-composition} and \ref{fig:accounting-scalars} show that our bound is tight and the competitiveness of the Skellam mechanism in high dimensions.

\subsection{Tight Accounting via R\'enyi Differential Privacy} \label{sec:skellam-multidim}

The following theorem states our main theoretical result, providing a relatively sharp bound on the RDP properties for the Skellam machanism. 

\begin{theorem}
\label{thm:mainSK}
For $\alpha\in\mathbb Z$, $\alpha > 1$ and sensitivity $\Delta \in \mathbb Z$, the Skellam Mechanism is $(\alpha, \epsilon)$-RDP  with
\begin{equation} \label{eq:rdp-summary}
\varepsilon(\alpha) \leq \frac{\alpha \Delta^{2}}{2 \mu}+\min \left(\frac{(2\alpha - 1) \Delta^{2}+6 \Delta}{4 \mu^2}, \frac{ 3\Delta}{2  \mu}\right),
\end{equation}
\end{theorem}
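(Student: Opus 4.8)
The plan is to turn the R\'enyi-divergence bound into a bound on a single scalar expectation involving a ratio of modified Bessel functions, and then reduce everything to a sharp pointwise estimate of that ratio. First, the normalizations: since $D_\alpha$ is translation invariant and $\operatorname{Sk}_{0,\mu}$ is symmetric about $0$, the map $k\mapsto -k$ shows that $D_\alpha(\operatorname{Sk}_{\Delta,\mu}\Vert\operatorname{Sk}_{0,\mu})=D_\alpha(\operatorname{Sk}_{0,\mu}\Vert\operatorname{Sk}_{\Delta,\mu})$, so both orderings coincide and we may assume $\Delta\ge 1$ (the case $\Delta=0$ being trivial). Plugging the Skellam pmf $e^{-\mu}I_{k-\Delta}(\mu)$ into Definition~\ref{def:rdp} and cancelling the $e^{-\mu}$ factors gives the exact identity
\[
\exp\big((\alpha-1)D_\alpha(\operatorname{Sk}_{\Delta,\mu}\Vert\operatorname{Sk}_{0,\mu})\big)\;=\;e^{-\mu}\sum_{k\in\mathbb Z}\frac{I_{k-\Delta}(\mu)^{\alpha}}{I_k(\mu)^{\alpha-1}}\;=\;\mathbb E_{S\sim\operatorname{Sk}_{0,\mu}}\!\left[\Big(\tfrac{I_{S-\Delta}(\mu)}{I_S(\mu)}\Big)^{\!\alpha}\right],
\]
where integrality of $\alpha$ is exactly what keeps the right-hand side a genuine finite sum; the whole theorem is an upper bound on this expectation.

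The technical core is an estimate on the Bessel ratio $R(k):=I_{k-\Delta}(\mu)/I_k(\mu)=\prod_{j=0}^{\Delta-1}\rho(k-j)$, where $\rho(m):=I_{m-1}(\mu)/I_m(\mu)$. The three-term recurrence $I_{\nu-1}(\mu)-I_{\nu+1}(\mu)=\tfrac{2\nu}{\mu}I_\nu(\mu)$ rewrites as $\rho(m)=\tfrac{2m}{\mu}+\rho(m+1)^{-1}$, and together with Tur\'an's inequality $I_\nu(\mu)^2>I_{\nu-1}(\mu)I_{\nu+1}(\mu)$ (equivalently, monotonicity of $\rho$) this pins $\rho$ down by induction, giving $\log\rho(m)=\tfrac{2m-1}{2\mu}+O(\mu^{-2})$ on the range of orders that carries essentially all the Skellam mass. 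Since $I_\nu(\mu)$ is decreasing in $|\nu|$ one also has the trivial bound $\log R(k)\le 0$ for all $k\le\Delta/2$. Telescoping $\log R(k)=-\sum_{j=k-\Delta+1}^{k}\log\rho(j)$ then yields, for $k$ in the bulk,
\[
\log R(k)\;\le\;\frac{2\Delta k-\Delta^{2}}{2\mu}\;+\;\big(\text{a correction in both the slope and the constant, of order }\tfrac{\Delta k+\Delta^{2}+\Delta}{\mu^{2}}\big),
\]
which is the only genuinely delicate estimate in the argument.

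Given this estimate, I would substitute it into the expectation: on the bulk it gives $R(S)^\alpha\le e^{-\alpha\Delta^{2}/(2\mu)}\,e^{(\alpha\Delta/\mu)S}\,e^{\alpha\cdot\mathrm{corr}(S)}$, while on the far-left tail $R(k)^\alpha\le 1$, so that part contributes at most the super-exponentially small quantity $P(\operatorname{Sk}_{0,\mu}<k_0)$ for a suitable negative threshold $k_0\le\Delta/2$. Using the closed-form moment generating function $\mathbb E_{S\sim\operatorname{Sk}_{0,\mu}}[e^{tS}]=e^{\mu(\cosh t-1)}$ at $t=\alpha\Delta/\mu$, together with the elementary inequality $\cosh t-1\le\tfrac{t^{2}}{2}\cosh t$ to peel off the main term $\tfrac{\alpha^{2}\Delta^{2}}{2\mu}$ from a controlled remainder, and then dividing the resulting exponent by $\alpha-1$, produces the main term $\tfrac{\alpha\Delta^{2}}{2\mu}$ together with the first expression inside the $\min$ of \eqref{eq:rdp-summary}, namely $\tfrac{(2\alpha-1)\Delta^{2}+6\Delta}{4\mu^{2}}$. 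The second expression $\tfrac{3\Delta}{2\mu}$ comes from running the same steps with a coarser, $\mu$-uniform bound on $\rho$ (and on $\cosh t-1$) that avoids the $\mu^{-2}$ bookkeeping and is the better of the two precisely when $\mu$ is small relative to $\alpha\Delta$; taking the minimum of the two bounds is then immediate.

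The main obstacle is clearly the Bessel-ratio estimate: it has to be simultaneously sharp enough to recover the Gaussian leading constant $\tfrac{\alpha}{2\mu}$ with only an $O(1/\mu)$ relative error, and robust enough (valid on a wide enough range of orders) that the discarded tail is dominated by the concentration of $\operatorname{Sk}_{0,\mu}$. The recurrence-plus-induction scheme for bounding Bessel ratios is classical in spirit, but carrying the error terms through the $\alpha$-th power and the summation against the noise is where the real work lies — and is presumably what forces the slightly inelegant $\min$ of two expressions in the final bound rather than a single clean formula.
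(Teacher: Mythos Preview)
Your route diverges from the paper's at the decisive organizational step. Both start from the same expectation, but the paper does \emph{not} bound $R(k)^\alpha$ by an exponential in $k$ and then integrate against the Skellam MGF. Instead it performs a change of base measure: writing
\[
e^{-\mu}\sum_{X}\frac{I_{X-\Delta}(\mu)^{\alpha}}{I_{X}(\mu)^{\alpha-1}}
\;=\;e^{-\mu}\sum_{X}I_{X-\alpha\Delta}(\mu)\,e^{\Phi_{X,\alpha,\Delta}(\mu)},
\qquad
\Phi_{X,\alpha,\Delta}(\mu):=\log\frac{I_{X-\Delta}(\mu)^{\alpha}}{I_{X}(\mu)^{\alpha-1}I_{X-\alpha\Delta}(\mu)},
\]
and then bounds $\Phi$ \emph{uniformly in $X$} (Lemma~\ref{lemma:phibound}). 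Since $e^{-\mu}\sum_X I_{X-\alpha\Delta}(\mu)=1$, the whole sum is at most $\max_X e^{\Phi}$, with no MGF needed. The point is that under the heuristic $\log I_\nu(\mu)\approx C(\mu)-\nu^2/(2\mu)$ the quantity $\Phi$ is exactly the constant $\alpha(\alpha-1)\Delta^2/(2\mu)$, so one only has to control the deviation of $\Phi$ from that constant; the residual terms in \eqref{eq:rdp-summary} are the pointwise residuals, obtained by a six-case analysis on the sign pattern of $X,\,X-\Delta,\,X-\alpha\Delta$ (using $I_{-\nu}=I_\nu$). What this buys over your MGF approach is that there is no $\cosh$ remainder, no slope-correction to thread through an exponential moment, and no tail to excise.

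Your scheme is plausible in spirit but two places would need real tightening before it yields the stated theorem. First, the paper explicitly notes that the classical Amos-type bounds---which are exactly what the recurrence-plus-Tur\'an induction produces---give $\arcsinh((\nu-\tfrac12)/\mu)\le\log\rho(\nu)\le\arcsinh(\nu/\mu)$, i.e.\ an $O(\mu^{-1})$ gap, not $O(\mu^{-2})$; feeding that into your telescoping yields only the second branch of the $\min$, namely the $O(\Delta/\mu)$ correction. The sharper $O(\mu^{-2})$ envelope requires the refined bounds of Lemma~\ref{lemma:BesselBounds} (from \cite{ruiz2016new}), so your ``recurrence plus Tur\'an pins $\rho$ down to $O(\mu^{-2})$'' is the step that would actually fail as written. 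Second, even granting the sharp ratio bound, the MGF step has to absorb both the $\cosh$ remainder (of order $\alpha^3\Delta^4/\mu^3$ after dividing by $\alpha-1$) and the $k$-dependent slope correction; it is not evident these collapse into exactly $\tfrac{(2\alpha-1)\Delta^2+6\Delta}{4\mu^2}$. The paper's specific constants arise from the case analysis on $\Phi$, with the worst case ($X\in[\Delta/2,\Delta]$) fixing the $3\Delta/(2\mu^2)$ term. Your framework would more likely produce a bound of the same order with different constants.
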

To remind the reader in comparison, the Gaussian mechanism is $(\alpha, \epsilon)$-RDP with $\epsilon(\alpha) = \frac{\alpha \Delta^{2}}{2 \mu}$. The bound we provide is at most $1 + O( 1/\mu)$ worse than the bound for the Gaussian, which is negligible for all practical choices of $\mu$, especially as the privacy requirements increase.\footnote{The restriction that $\alpha$ needs to be an integer is a technical one owing to known bounds on Bessel functions. In practice as we show, this restriction has a negligible effect.} Next we show a simple corollary which follows via the independent composition of RDP across dimensions. 

\begin{corollary}   \label{corollary:multidim-rdp}
The multi-dimensional Skellam Mechanism is $(\alpha, \epsilon)$-RDP with 
\begin{equation} \label{eq:multidim-rdp}
\varepsilon(\alpha) \le \frac{\alpha \Delta_2^{2}}{2 \mu}+\min \left(\frac{(2\alpha - 1) \Delta_2^{2}+6 \Delta_1}{4 \mu^2}, \frac{ 3\Delta_1}{2  \mu}\right).
\end{equation}
where $\Delta_1$ and $\Delta_2$ are the $\ell_1$ and $\ell_2$ sensitivities respectively.
\end{corollary}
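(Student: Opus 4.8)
The plan is to deduce the multi-dimensional bound directly from the scalar Theorem~\ref{thm:mainSK} by using that the Skellam noise is injected independently in each of the $d$ coordinates. Fix a pair of neighboring datasets $D,D'$ and set $\delta = f(D) - f(D') \in \mathbb{Z}^d$; since $f$ takes integer values, each coordinate $\delta_i$ is an integer, and by definition of the sensitivities $\|\delta\|_2 \le \Delta_2$ and $\|\delta\|_1 \le \Delta_1$. The law of $\operatorname{Sk}_{0,\mu}(f(D))$ is the product over $i$ of $\operatorname{Sk}_{f(D)_i,\mu}$, and similarly for $D'$. Because the likelihood ratio of a product distribution factorizes, Definition~\ref{def:rdp} yields the tensorization identity
\[
D_\alpha\big(\operatorname{Sk}_{0,\mu}(f(D)),\,\operatorname{Sk}_{0,\mu}(f(D'))\big) \;=\; \sum_{i=1}^{d} D_\alpha\big(\operatorname{Sk}_{\delta_i,\mu},\,\operatorname{Sk}_{0,\mu}\big),
\]
where each summand with $\delta_i = 0$ vanishes and each with $\delta_i \neq 0$ is a one-dimensional shifted-Skellam R\'enyi divergence with integer shift.

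Next I would apply Theorem~\ref{thm:mainSK} to each nonzero term with $\Delta = |\delta_i|$, obtaining
\[
D_\alpha\big(\operatorname{Sk}_{\delta_i,\mu},\,\operatorname{Sk}_{0,\mu}\big) \;\le\; \frac{\alpha \delta_i^2}{2\mu} + \min\left(\frac{(2\alpha-1)\delta_i^2 + 6|\delta_i|}{4\mu^2},\; \frac{3|\delta_i|}{2\mu}\right).
\]
Summing over $i$, the first term contributes $\frac{\alpha}{2\mu}\sum_i \delta_i^2 = \frac{\alpha \|\delta\|_2^2}{2\mu} \le \frac{\alpha \Delta_2^2}{2\mu}$. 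For the second term I would pull the sum through the minimum using the elementary fact $\sum_i \min(a_i,b_i) \le \min\big(\sum_i a_i,\, \sum_i b_i\big)$, which holds because $\min(a_i,b_i)$ is at most $a_i$ and at most $b_i$ for every $i$; this gives
\[
\sum_{i=1}^d \min\left(\frac{(2\alpha-1)\delta_i^2 + 6|\delta_i|}{4\mu^2},\; \frac{3|\delta_i|}{2\mu}\right) \;\le\; \min\left(\frac{(2\alpha-1)\|\delta\|_2^2 + 6\|\delta\|_1}{4\mu^2},\; \frac{3\|\delta\|_1}{2\mu}\right).
\]
Bounding $\|\delta\|_2^2 \le \Delta_2^2$ and $\|\delta\|_1 \le \Delta_1$ in every occurrence (each bound is monotone in the corresponding term) and then taking the supremum over neighboring $D,D'$ gives exactly~\eqref{eq:multidim-rdp}.

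There is no substantive obstacle here — the point is simply that RDP composes additively over the independent coordinates — but three small checks deserve attention. First, one must observe that the per-coordinate shifts $\delta_i$ are genuine integers so that Theorem~\ref{thm:mainSK} is applicable, and that coordinates with $\delta_i = 0$ contribute nothing. Second, the interchange $\sum_i \min(\cdot,\cdot) \le \min(\sum_i \cdot,\, \sum_i \cdot)$ is what lets the $\min$ in the final bound be taken once globally rather than coordinate by coordinate; the direction of this inequality is exactly the one we need, and it does not require the two branches to agree on which is smaller. Third, since any neighboring pair satisfies both $\|\delta\|_1 \le \Delta_1$ and $\|\delta\|_2 \le \Delta_2$ simultaneously, replacing the norms of $\delta$ by the sensitivities is legitimate and the supremum over neighboring $D,D'$ is controlled by the same expression.
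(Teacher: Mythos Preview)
Your proposal is correct and matches the paper's approach: the paper only remarks that the corollary ``follows via the independent composition of RDP across dimensions,'' and you have written out precisely that argument---tensorization of the R\'enyi divergence over the independent coordinates, the coordinatewise application of Theorem~\ref{thm:mainSK}, and the inequality $\sum_i \min(a_i,b_i) \le \min(\sum_i a_i,\sum_i b_i)$ to push the sum inside the $\min$.
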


\subsubsection{Proof Overview for Theorem \ref{thm:mainSK}}
In this subsection, we provide the proof of Theorem~\ref{thm:mainSK} assuming a technical bound on the ratios of Bessel functions presented as Lemma~\ref{lemma:phibound}, which is the core of our analysis and may be of independent interest. We provide a proof overview for Lemma~\ref{lemma:phibound}, deferring the full proof to the appendix.

On a macroscopic level, our proof structure mimics the RDP proof for the Gaussian mechanism~\cite{mironov2017renyi}, and the main object of our interest is to bound the following quantity, defined for any $X,\Delta,\alpha$:
\begin{equation}
  \Phi_{X,\alpha,\Delta}(\mu) \defeq \log \left( \frac{I_{X - \Delta}(\mu)}{I_{X - \alpha \Delta}(\mu)}\left(\frac{I_{X - \Delta}(\mu)}{I_{X}(\mu)}\right)^{\alpha - 1} \right).
\end{equation}
The following lemma states our main bound on this quantity. 
\begin{lemma}
  \label{lemma:phibound}
  For any $X, \alpha \in \mathbb{N}$, with $\alpha > 1$ and $\Delta \in \mathbb{Z}$, we have that for all $\mu \geq 0$
  \[ \Phi_{X,\alpha,\Delta}(\mu) \leq \frac{\alpha(\alpha-1)\Delta^2}{2\mu} + \min\left( \frac{(2\alpha-1)(\alpha-1)\Delta^2}{4\mu^2} + \frac{3(\alpha-1)|\Delta|}{2\mu^2}, \frac{3(\alpha-1)|\Delta|}{2\mu} \right). \]
\end{lemma}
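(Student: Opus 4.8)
The plan is to reduce the lemma to sharp one-dimensional estimates on ratios of modified Bessel functions of \emph{integer} order; the hypothesis $\alpha\in\mathbb N$ (so that $X-\alpha\Delta\in\mathbb Z$) is used precisely to make those estimates available.

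\textbf{Step 1 (algebraic reduction).} Write $h(\nu)\defeq\log I_\nu(\mu)$. Unwinding the definition gives $\Phi_{X,\alpha,\Delta}(\mu)=\alpha\,h(X-\Delta)-h(X-\alpha\Delta)-(\alpha-1)\,h(X)$. The weights and the weighted first moments of the three nodes $X-\alpha\Delta<X-\Delta<X$ both vanish, so this expression equals $-\alpha(\alpha-1)\Delta^2$ times the second divided difference of $h$ at these nodes; in particular it is nonnegative because $\nu\mapsto\log I_\nu(\mu)$ is concave on $\mathbb Z$ (Turán's inequality). Since $I_{-\nu}(\mu)=I_\nu(\mu)$ for $\nu\in\mathbb Z$ and $\Phi_{X,\alpha,\Delta}(\mu)=\Phi_{-X,\alpha,-\Delta}(\mu)$, I may assume $\Delta\ge 1$ (the case $\Delta=0$ is trivial) while keeping $X\in\mathbb Z$ arbitrary.

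\textbf{Step 2 (telescoping and the main term).} All three nodes being integers, $\Phi$ is a signed sum of elementary increments $\psi(k)\defeq\log\!\big(I_{k-1}(\mu)/I_k(\mu)\big)$ over two adjacent integer windows:
\[\Phi_{X,\alpha,\Delta}(\mu)\;=\;-\!\!\!\sum_{k=X-\alpha\Delta+1}^{X-\Delta}\!\!\!\psi(k)\;+\;(\alpha-1)\!\!\!\sum_{k=X-\Delta+1}^{X}\!\!\!\psi(k).\]
Writing $\psi(k)=\tfrac{2k-1}{2\mu}+\operatorname{err}(k)$ — the first term being what $\log(I_{k-1}/I_k)$ would be if $I_\nu(\mu)\propto e^{-\nu^2/(2\mu)}$ — and using $\sum_{k=a+1}^{b}(2k-1)=b^2-a^2$, the ``Gaussian parts'' telescope to \emph{exactly} $\tfrac{\alpha(\alpha-1)\Delta^2}{2\mu}$, with every $X$-dependent term cancelling. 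Hence $\Phi_{X,\alpha,\Delta}(\mu)=\tfrac{\alpha(\alpha-1)\Delta^2}{2\mu}+R$, where $R$ is the same signed combination of the residuals $\operatorname{err}(k)$, and the whole problem is to bound $R$.

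\textbf{Step 3 (Bessel ratio estimates and bounding $R$).} The core input is a pair of estimates for $\psi(k)=\log(I_{k-1}(\mu)/I_k(\mu))$, valid for all $k\in\mathbb Z$, $\mu>0$: for the sharp branch one needs $\psi(k)=\tfrac{2k-1}{2\mu}+O(1/\mu^2)$ with sufficiently explicit control of the $O(1/\mu^2)$ term, while for the robust branch a cruder $O(1/\mu)$-type control of $\psi(k)-\tfrac{2k-1}{2\mu}$ suffices. Both follow from classical bounds on $I_k(\mu)/I_{k-1}(\mu)$ (Amos/Segura-type inequalities and the associated continued fraction $I_{k-1}/I_k=\tfrac{2k}{\mu}+I_{k+1}/I_k$), using the reflection $\psi(1-k)=-\psi(k)$ to handle negative $k$. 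Substituting these into the signed combination $R$, tracking constants, and again invoking the telescoping identity for the residual sums yields $R\le\tfrac{(2\alpha-1)(\alpha-1)\Delta^2}{4\mu^2}+\tfrac{3(\alpha-1)|\Delta|}{2\mu^2}$ from the sharp estimates and $R\le\tfrac{3(\alpha-1)|\Delta|}{2\mu}$ from the crude ones; taking the minimum gives the lemma.

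\textbf{Where the difficulty lies.} Essentially all the work is in Step 3. The approximation $\psi(k)\approx\tfrac{2k-1}{2\mu}$ is accurate only in the ``Gaussian regime'' $|k|\lesssim\mu$; in the tail $|k|\gtrsim\mu$ the true $\psi(k)$ grows only like $\log(2|k|/\mu)$, and near $k=0$ the function $\nu\mapsto\log I_{|\nu|}(\mu)$ has a corner whose discrete curvature $2\log(I_1(\mu)/I_0(\mu))$ can exceed $1/\mu$ at moderate $\mu$. Thus $\operatorname{err}(k)$ is neither small nor sign-definite, and the bound on $R$ genuinely relies on cancellation inside the signed window combination — which is also why the statement is a $\min$ of two bounds rather than a single clean $C^2$ mean-value estimate for the divided difference. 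Pinning down the Bessel-ratio inequalities with the exact constants, uniformly over all integers $k$ and all $\mu>0$, and doing the bookkeeping so the residual bound comes out independent of $X$, is the crux.
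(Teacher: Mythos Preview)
Your plan matches the paper's proof: telescope $\Phi$ into signed sums of $\psi(k)=\log(I_{k-1}(\mu)/I_k(\mu))$, verify that the $\tfrac{2k-1}{2\mu}$ parts collapse exactly to $\tfrac{\alpha(\alpha-1)\Delta^2}{2\mu}$, and then control the residual by pairing terms across the two windows and invoking sharp Bessel-ratio inequalities. The paper carries out your Step~3 concretely via the Ruiz--Segura (2016) sandwich $\arcsinh(\delta_0(\nu,\mu))\le\psi(\nu)\le\arcsinh(\delta_2(\nu,\mu))$ together with $\arcsinh(y)-\arcsinh(x)\le y-x$ and a six-case split on the position of $X$ relative to $0,\Delta/2,\Delta,(\alpha{+}1)\Delta/2,\alpha\Delta$ (this is exactly where your reflection $\psi(1-k)=-\psi(k)$ gets unpacked); it also warns explicitly that the looser Amos-type bounds you mention yield only the $O(\Delta/\mu)$ branch of the $\min$, so the sharper Segura-type bound is essential for the $O(1/\mu^2)$ branch.
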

Note that in contrast if we consider the analogous notion of $\Phi$ for the Gaussian mechanism (replacing $I_{X}(\mu)$ with the Gaussian density $e^{-X^2/2\mu}$), we readily get the bound $\frac{\alpha(\alpha-1) \Delta^2}{2\mu}$, which is the same as our bound up to lower order terms. We now provide the proof of Theorem \ref{thm:mainSK}. 
\begin{proof}[Proof of Theorem \ref{thm:mainSK}]
By RDP definition (\ref{def:rdp}), we need to bound the following for any $\Delta$, $\alpha \geq 1$, 
\[D_{\alpha}\left(\operatorname{Sk}_{\Delta,\mu}, \operatorname{Sk}_{0,\mu} \right)  = \frac{1}{\alpha - 1} {\log\left( \sum_{X = -\infty}^{\infty}e^{-\mu}I_{X-\Delta}(\mu)\left(\frac{I_{X - \Delta}(\mu)}{I_{X}(\mu)}\right)^{\alpha-1}\right)}\]
Now consider the following calculations on the $\log$ term:
\begin{align*}
\log&\left( \sum_{X = -\infty}^{\infty}\frac{I_{X-\Delta}(\mu)}{e^{\mu}}\left(\frac{I_{X - \Delta}(\mu)}{I_{X}(\mu)}\right)^{\alpha-1}\right) = \log\left( \sum_{X = -\infty}^{\infty}\frac{I_{X -  \alpha\Delta}(\mu)}{e^{\mu}} e^{\Phi_{X, \alpha, \Delta(\mu)}}\right) \\
&\leq \log\left( \sum_{X = -\infty}^{\infty}e^{-\mu}I_{X -  \alpha\Delta}(\mu)\right) + \max_{X \in \mathbb{Z}} \Phi_{X, \alpha, \Delta(\mu)} \\
&\leq \frac{\alpha(\alpha-1)\Delta^2}{2\mu} + \min\left( \frac{(2\alpha-1)(\alpha-1)\Delta^2}{4\mu^2} + \frac{3(\alpha-1)|\Delta|}{2\mu^2}, \frac{3(\alpha-1)|\Delta|}{2\mu} \right),
\end{align*}
where the inequality follows from Lemma \ref{lemma:phibound}.
\end{proof}
We now provide an overview for the proof of Lemma \ref{lemma:phibound} highlighting the crux of the argument. As a first step we collect some known facts regarding Bessel functions. It is known that for $x \geq 0$ and $\nu \in \mathbb{Z}$, $\nu \geq 0$, $I_{\nu}(x)$ is a decreasing function in $\nu$, $I_{-\nu}(x) = I_{\nu}(x)$ and $\frac{I_{\nu-1}(\mu)}{I_{\nu}(x)}$ is an increasing function in $\nu$ \cite{thiruvenkatachar1951inequalities}. A succession of works consider bounding the ratio of successive Bessel functions $I_{\nu-1}(x)/I_{\nu}(x)$, which is a natural quantity to considering the objective in Lemma \ref{lemma:phibound}. We use the following very tight characterization for this recently proved in \cite[Theorem 5]{ruiz2016new}.
\begin{lemma}
\label{lemma:BesselBounds}
For any $\nu \geq 1/2, x \geq 0$ define the following function we have that 
\[ \arcsinh(\delta_0(\nu, x)) \leq \log(I_{\nu-1}(x)) -  \log(I_{\nu}(x)) \leq \arcsinh(\delta_2(\nu, x))\]
where $\delta_{\alpha}(\nu, x)$ is defined as $ \delta_{\alpha}(\nu, x) \defeq \frac{\nu-1/2}{x} + \frac{\nu+(\alpha-1)/2}{2x \sqrt{(\nu+(\alpha-1)/2)^2 + x^2}}$.
\end{lemma}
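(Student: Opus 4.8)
The plan is to convert the two-sided bound on the log-ratio into a two-sided bound on the ratio itself, reduce it to a first-order (Riccati) differential inequality, and verify that the two candidate functions are respectively a super- and a sub-solution. Write $G_\nu(x) \defeq I_{\nu-1}(x)/I_\nu(x)$, so the target reads $\arcsinh(\delta_0) \le \log G_\nu(x) \le \arcsinh(\delta_2)$. Since $\arcsinh$ is increasing and $e^{\arcsinh(\delta)} = \delta + \sqrt{\delta^2+1}$, this is equivalent to the purely algebraic sandwich $g_0(x) \le G_\nu(x) \le g_2(x)$, where $g_\alpha \defeq \delta_\alpha + \sqrt{\delta_\alpha^2+1}$. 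The two elementary identities $g_\alpha^2 = 2\delta_\alpha g_\alpha + 1$ (equivalently $g_\alpha - g_\alpha^{-1} = 2\delta_\alpha$) and $\log g_\alpha = \arcsinh(\delta_\alpha)$ will drive everything below.

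First I would derive the Riccati equation for $G_\nu$. From the standard identities $I_m'(x) = I_{m-1}(x) - \tfrac{m}{x}I_m(x) = I_{m+1}(x) + \tfrac{m}{x}I_m(x)$, differentiating the quotient gives $G_\nu'(x) = 1 + \tfrac{2\nu-1}{x}G_\nu(x) - G_\nu(x)^2$; abbreviate the right-hand side as $R(x,G) \defeq 1 + \tfrac{2\nu-1}{x}G - G^2$. The decisive simplification is that $R$ collapses cleanly when evaluated at a candidate $g_\alpha$: using $g_\alpha^2 = 2\delta_\alpha g_\alpha + 1$ together with $2\delta_\alpha = \tfrac{2\nu-1}{x} + \tfrac{\beta}{x\sqrt{\beta^2+x^2}}$, where $\beta = \nu + (\alpha-1)/2$, one finds $R(x,g_\alpha) = g_\alpha(\tfrac{2\nu-1}{x} - 2\delta_\alpha) = -\tfrac{\beta\, g_\alpha}{x\sqrt{\beta^2+x^2}}$. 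Dividing by $g_\alpha > 0$ therefore reduces the entire problem to the scalar inequalities $(\log g_\alpha)'(x) = \tfrac{\delta_\alpha'(x)}{\sqrt{1+\delta_\alpha(x)^2}} \gtrless -\tfrac{\beta}{x\sqrt{\beta^2+x^2}}$, with ``$\ge$'' for $\alpha = 2$ (so that $g_2$ is a super-solution, giving the upper bound) and ``$\le$'' for $\alpha = 0$ (so that $g_0$ is a sub-solution, giving the lower bound).

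The comparison step converts each one-sided Riccati inequality into the corresponding one-sided bound on $G_\nu$. Setting $D = G_\nu - g_\alpha$ and $w(x) = \tfrac{2\nu-1}{x} - (G_\nu + g_\alpha)$, the factorization $R(x,G_\nu) - R(x,g_\alpha) = w\,D$ yields $D' - wD = R(x,g_\alpha) - g_\alpha'$, whose sign is fixed by the sub-/super-solution property; hence $(D\,e^{-\int w})'$ keeps that sign and $D\,e^{-\int w}$ is monotone. I would anchor at $x \to 0^+$: there the leading $2\nu/x$ singularities of $G_\nu$ and $g_\alpha$ cancel, so $D \to 0$, and the same expansion shows $D\,e^{-\int w} \to 0$. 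Monotonicity then forces $D$ to keep the correct sign on all of $(0,\infty)$ — $D \le 0$ when $g_2$ is a super-solution (upper bound) and $D \ge 0$ when $g_0$ is a sub-solution (lower bound) — with the endpoint cases $x=0$ and $x\to\infty$ recovered by continuity and the known limits $G_\nu \to 1$, $\delta_\alpha \to 0$.

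I expect the scalar inequalities to be the main obstacle. After squaring to clear the roots (legitimate, as both sides are negative), each becomes a polynomial-type inequality in $x$, $\nu$, and $\sqrt{\beta^2+x^2}$ that must hold uniformly over $\nu \ge 1/2$, $x > 0$. The upper bound ($\alpha=2$, $\beta=\nu+1/2$) has slack: the two sides already separate at leading order as $x\to\infty$, namely $-(\nu-1/2)/x^2$ versus $-(\nu+1/2)/x^2$. The lower bound ($\alpha=0$, $\beta=\nu-1/2$) is the genuinely delicate case, since the choice $\beta=\nu-1/2$ makes the two sides agree to leading order — precisely what makes the bound sharp — so the inequality hinges on a careful next-order comparison and on controlling $\delta_0'$ against $\sqrt{1+\delta_0^2}$ across the whole range. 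A secondary subtlety is the comparison bookkeeping itself: because $\partial_G R = \tfrac{2\nu-1}{x} - 2G$ is sign-indefinite, a naive Gronwall estimate with a signed coefficient is unavailable, and one must rely on the monotone product $D\,e^{-\int w}$ pinned by its vanishing at $x\to 0^+$.
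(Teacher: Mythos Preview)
The paper does not prove this lemma at all: it is quoted verbatim as a known result, attributed to \cite[Theorem 5]{ruiz2016new}, and used as a black box in the subsequent analysis. So there is nothing to compare your argument against within the paper itself.

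That said, your Riccati-equation approach is the natural one for this kind of statement and is essentially how such bounds are established in the literature on ratios of modified Bessel functions (Amos, Segura, Ruiz--Antol\'in, etc.). Your derivation of $G_\nu' = 1 + \tfrac{2\nu-1}{x}G_\nu - G_\nu^2$ is correct, as is the reduction $R(x,g_\alpha) = -\tfrac{\beta g_\alpha}{x\sqrt{\beta^2+x^2}}$ via $g_\alpha^2 = 2\delta_\alpha g_\alpha + 1$; this is indeed the clean way to collapse the problem to a pair of scalar inequalities in $x$ and $\nu$. You also correctly flag the two genuine difficulties: the $\alpha=0$ case is tight to leading order (so the remaining inequality is a next-order comparison that needs to be done carefully), and the comparison principle is not entirely routine because $\partial_G R$ changes sign. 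On the latter, your integrating-factor argument is right in spirit, but the anchoring step---showing $D\,e^{-\int w}\to 0$ as $x\to 0^+$---deserves more care than you give it: both $G_\nu$ and $g_\alpha$ blow up like $2\nu/x$, so $D$ is a difference of singular terms, while $w\sim -\tfrac{2\nu+1}{x}$ makes $e^{-\int w}$ blow up polynomially; you need the next-order expansion of $D$ to confirm the product actually vanishes. Since the paper simply imports the result, if you want a self-contained argument you will have to carry these two steps through explicitly; otherwise, citing Ruiz--Antol\'in as the paper does is entirely adequate.
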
 
Standard bounds such as those appearing in \cite{amos1974computation, valovich2017computational} lead to the following conclusion:
\[ \arcsinh\left((\nu-1/2)/x\right) \leq \log(I_{\nu-1}(x)) -  \log(I_{\nu}(x)) \leq \arcsinh\left(\nu/x)\right).\]
While the above bound is significantly easier to work with, it leads to an RDP guarantee of Gaussian RDP + $O(\frac{\Delta}{\mu})$. In high dimensions this manifests as $O(\frac{\Delta_1}{\mu})$ and overall leads to a constant multiplicative factor over the Gaussian. On the other hand we prove a Gaussian RDP + $o_{\mu}(1)$ bound. 
Our proof of Lemma \ref{lemma:phibound} splits into various cases depending on the signs of the quantities involved. We show the derivation for a single case below and defer the full proof to the appendix. 
\begin{proof}[Proof of Lemma \ref{lemma:phibound} in the case $X \geq \alpha\Delta$, $\Delta \geq 0$] Replacing $Y = X-\alpha \delta$ we get that
\begin{align*}
\Phi_{X,\alpha,\Delta}(\mu) &= \log \left( \frac{I_{Y + (\alpha-1) \Delta}(\mu)}{I_{Y}(\mu)}\left(\frac{I_{Y + (\alpha-1) \Delta}(\mu)}{I_{Y + \alpha \Delta}(\mu)}\right)^{\alpha - 1} \right) \\
    &=\sum_{j=0}^{\alpha-2} \left( \sum_{i=Y + j\Delta + 1}^{Y + j\Delta + \Delta} \left( \log\left(\frac{I_{i-1+(\alpha-1-j)}(\mu)}{I_{i + (\alpha-1-j)\Delta}(\mu)}\right) - \log\left(\frac{I_{i-1}(\mu)}{I_{i}(\mu)}\right)  \right) \right)\\
    &\leq\sum_{j=0}^{\alpha-2} \left( \sum_{i=Y + j\Delta + 1}^{Y + j\Delta + \Delta} \big(
    \delta_2(i+(\alpha - 1 - j)\Delta, \mu) - \delta_0(i, \mu) \big)
    \right) \\
    &\leq\frac{\alpha(\alpha-1) \Delta^2}{2\mu} +  \min\left(\frac{\alpha(\alpha - 1)\Delta^2 + 2(\alpha-1) \Delta}{4\mu^2}, \frac{(\alpha-1)\Delta}{2\mu}\right),
\end{align*}
where the first inequality follows from Lemma \ref{lemma:BesselBounds} and the fact that for all $0 \leq x \leq y$, $\arcsinh(y) - \arcsinh(x) \leq y-x$ and the second inequality follows from Lemma \ref{lemma:deltabound} (provided in the appendix):
\[ \delta_2(\nu_1, x) - \delta_0(\nu_2, x) \leq \frac{\nu_1 - \nu_2}{x} + \frac{1}{2x}\min\left( \frac{\nu_1 - \nu_2 + 1}{x} , 1 \right).\]
\end{proof}
\begin{figure}[t]
    \centering
    \includegraphics[width=0.48\linewidth,bb=0 0 339 209]{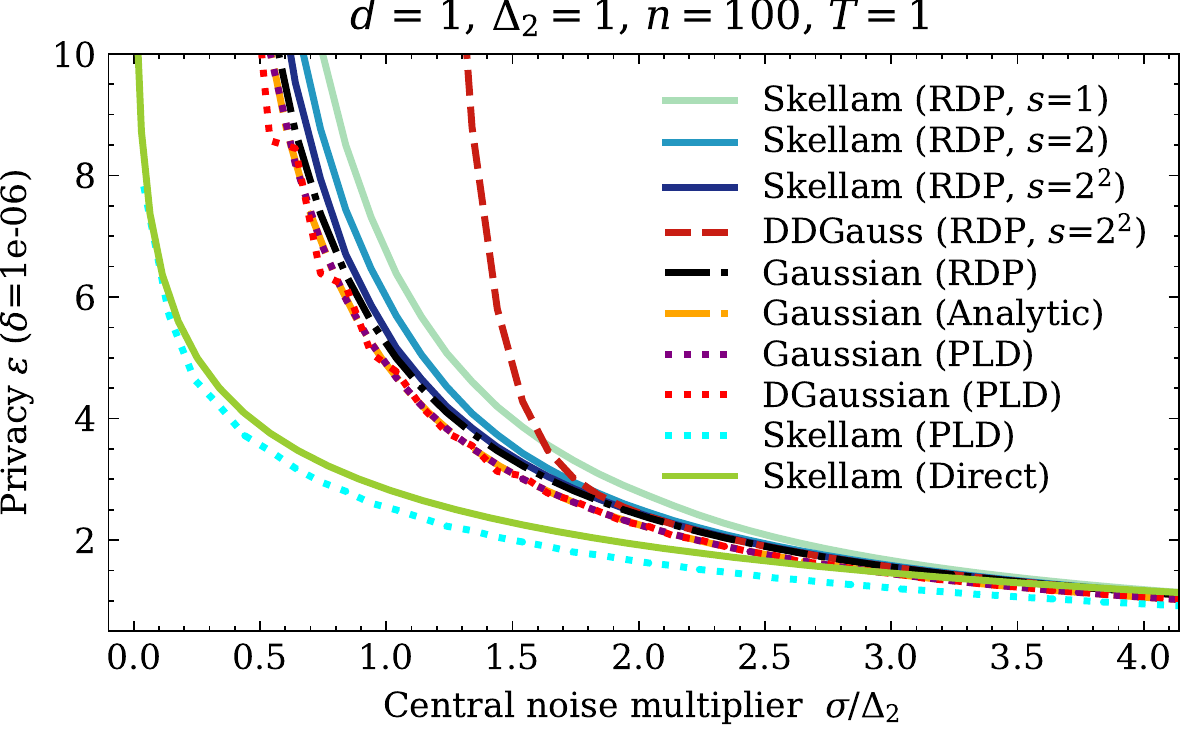}
    \enspace
    \includegraphics[width=0.48\linewidth,bb=0 0 339 209]{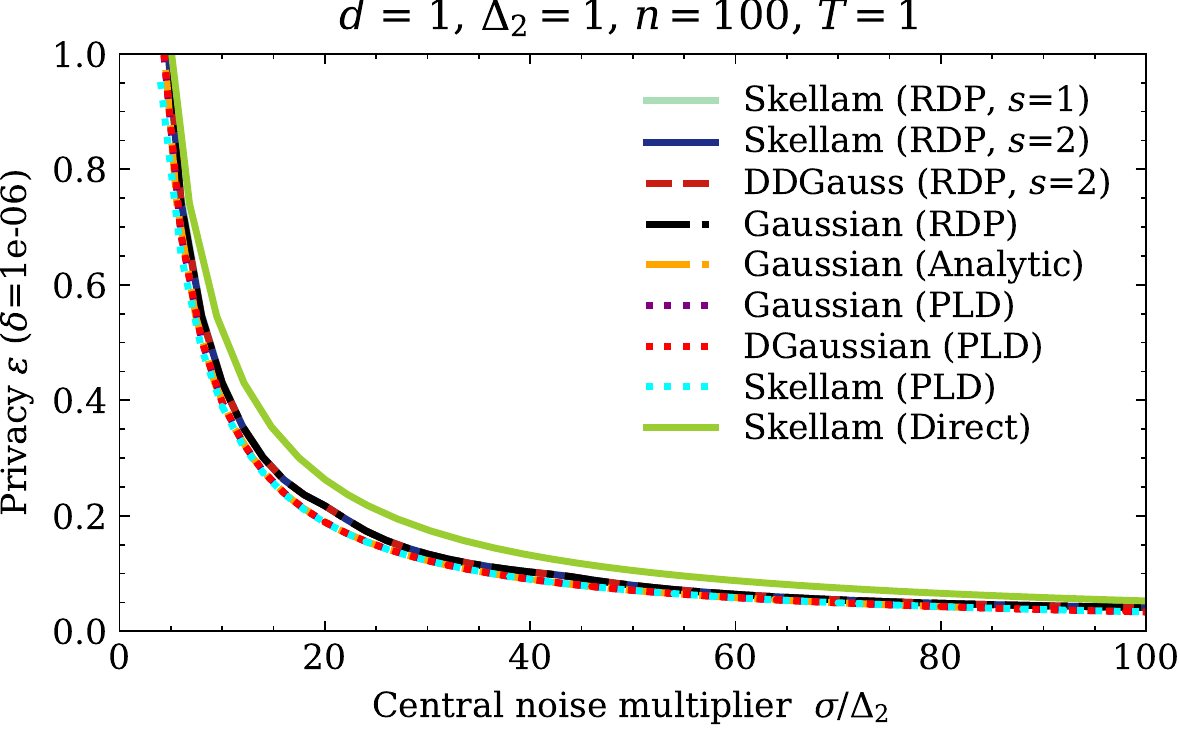}
    \caption{Benchmarking Skellam on sensitivity-1 queries under various accounting methods. RDP: R\'enyi DP. PLD: privacy loss distributions. Skellam (Direct):~\cite{valovich2017computational}. Gaussian (Analytic):~\cite{balle2018improving}. DGaussian~\cite{CanonneKS20} / DDGauss~\cite{ddg}: central / distributed discrete Gaussian. 
    $s$ is the scaling factor applied to both $\Delta$ and $\sigma$. For Skellam and DDGauss~\cite{ddg}, the central noise with std $\sigma$ is split into $n$ shares each applied locally with std $\sigma / \sqrt{n}$; a large $n$ and small $\sigma$ can thus exacerbate the sum divergence term of DDGauss (left). \textbf{Left}: $\varepsilon \le 10$. \textbf{Right}: $\varepsilon \le 1$. }
    \label{fig:accounting-scalars}
    \ifnum\neuripsfinal=1
    \fi
\end{figure}
\vspace{-2.5em}
\section{Applying the Skellam Mechanism to Federated Learning}
\label{sec:applying-skellam}

With a sharp RDP analysis for the multi-dimensional Skellam mechanism presented in the previous section, we are now ready to apply it to differentially private federated learning. We first outline the general problem setting and then describe our approach under central and distributed DP models.

\textbf{Problem setting}\quad
At a high-level, we consider the distributed mean estimation problem. There are $n$ clients each holding a vector $x_i$ in $\mathbb R^d$ such that for all $i$, the vector norm is bounded as $\|x_i\|_2 \leq c$ for some $c \ge 0$. We denote the set of vectors as $\smash{\mathcal{X} = \{x_i\}_{i=1}^n}$, and the aim is for each client to communicate the vectors $x_i$ to a central server which then aggregates them as $\widehat x = \frac{1}{n} \sum_i x_i$ for an external analyst. In federated learning, the client vectors $x_i$ are the model gradients or model deltas (typically $d \ge 10^6$) after training on the clients' local datasets, and this procedure can be repeated for many rounds ($T > 1000$). A large $d$ and $T$ thus necessitate accounting methods that provide tight privacy compositions for high-dimensional queries.

We are primarily concerned with three metrics for this procedure and their trade-offs: (1) \textit{Privacy}: the mean $\widehat x$ should be differentially private with a reasonably small $(\varepsilon, \delta)$; (2) \textit{Error}: we wish to minimize the expected $\ell_2$ error; and (3) \textit{Communication}: we wish to minimize the average number of bits communicated per coordinate.
Characterizing this trade-off is an important research problem. For example, it has been recently shown~\cite{yin2021see} that without formal privacy guarantees, the client training data could still be revealed by the model updates $x_i$; on the other hand, applying differential privacy~\cite{tramer2020differentially} to these updates can degrade the final utility. 
\begin{figure}[t]
    \centering
    \includegraphics[width=0.48\linewidth,bb=0 0 317 188]{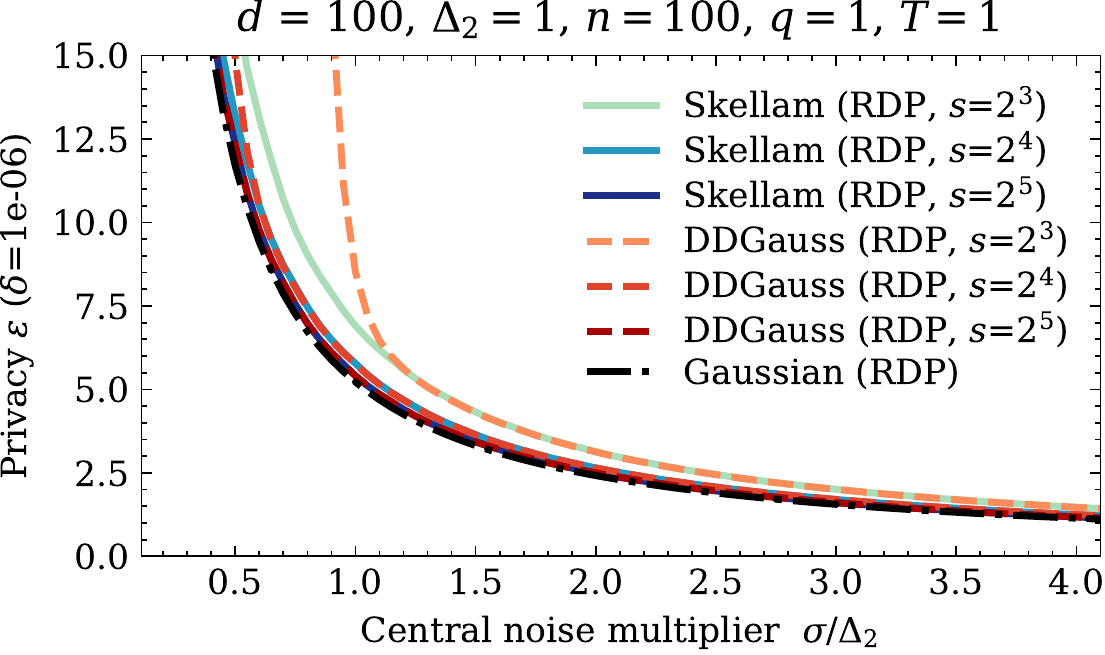}
    \enspace
    \includegraphics[width=0.48\linewidth,bb=0 0 317 188]{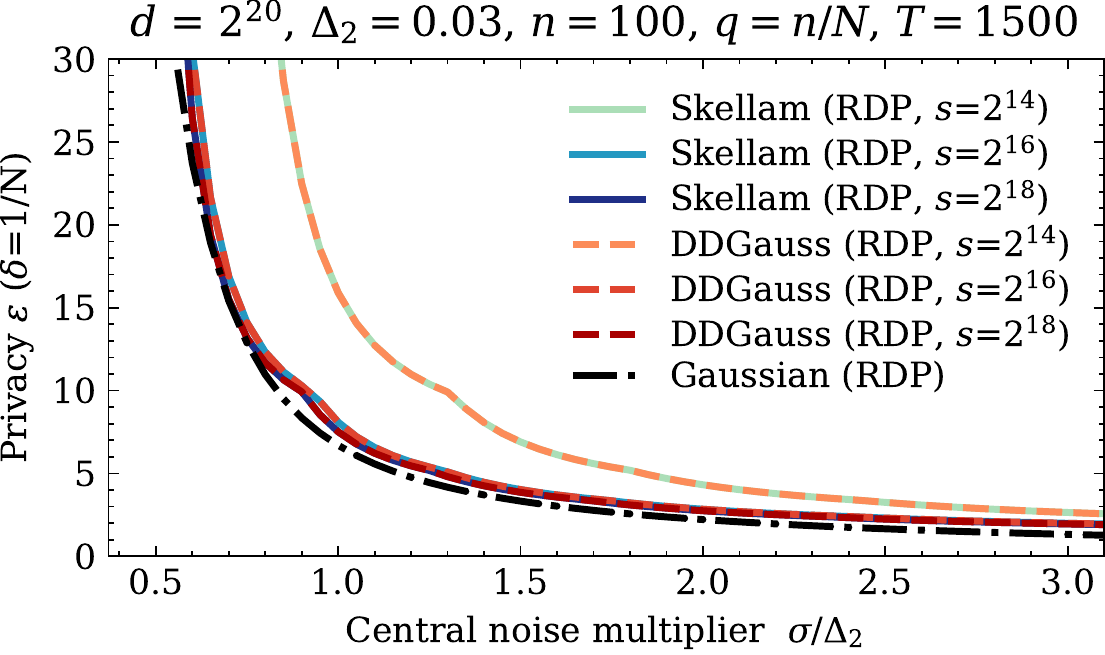}
    \caption{Comparing Skellam and Distributed Discrete Gaussian (DDGauss) on multi-dimensional real-valued queries, rounded to integers with $\beta=e^{-0.5}$ (Prop.~\ref{prop:bounded-rounded-norm}). $s$ is the scaling applied to both $\sigma$ and $\Delta_2$; a larger $s$ reduces the rounding error and norm inflation. $q$ is the sampling rate. For Skellam and DDGauss~\cite{ddg}, the central noise with std $\sigma$ is split into $n$ shares each applied locally with std $\sigma / \sqrt{n}$; a large $n$ and small $\sigma$ can exacerbate the sum divergence term of DDGauss. \textbf{Left:} Simple setting with $\Delta_2=1$. \textbf{Right:} FL-like setting for training CNNs on Federated EMNIST.}
    \label{fig:accounting-multi-dim-floats}
    \ifnum\neuripsfinal=1
    \vspace{-1em}
    \fi
\end{figure}

\textbf{Skellam for central DP}\quad
The central DP model refers to adding Skellam noise onto the non-private aggregate $\widehat x$ before releasing it to the external analyst. One important consideration is that the model updates in FL are continuous in nature, while Skellam is a discrete probability distribution. One approach is to appropriately discretize the client updates, e.g., via uniform quantization (which involves scaling the inputs by a factor $s \sim 2^b$ for some bit-width $b$ followed by stochastic rounding\footnote{Example of stochastic rounding: 42.3 has 0.7 and 0.3 probability to be rounded to 42 and 43, respectively. Other discretization schemes are possible; we do not explore this direction further in this work.} for unbiased estimates), and the server can convert the private aggregate back to real numbers at the end. 
Note that this allows us to re-parameterize the variance of the added Skellam noise as $s^2\mu$, giving the following simple corollary based on Cor.~\ref{corollary:multidim-rdp}:
\begin{corollary}[Scaled Skellam Mechanism] \label{corollary:scaled-multidim-rdp}
With a scaling factor $s \in \mathbb R$, the multi-dimensional Skellam Mechanism is $(\alpha, \epsilon)$-RDP with 
\begin{equation} \label{eq:scaled-multidim-rdp}
\varepsilon(\alpha) \le \frac{\alpha \Delta_{2}^2}{2 \mu}+\min \left(\frac{(2\alpha - 1)\Delta_{2}^2}{4 s^2 \mu^{2}} + \frac{3 \Delta_1}{2 s^3 \mu^{2}}, \frac{3\Delta_1}{2 s \mu}\right).  
\end{equation}
\end{corollary}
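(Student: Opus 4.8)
The plan is to deduce the corollary directly from Corollary~\ref{corollary:multidim-rdp} by a change of variables. After scaling the inputs by $s$ (and stochastically rounding so the query stays integer-valued coordinate-wise), the effective query is $s\cdot f(D)$, whose $\ell_1$ and $\ell_2$ sensitivities are $s\Delta_1$ and $s\Delta_2$, and to this we add Skellam noise of variance $s^2\mu$. The final rescaling of the released vector by $1/s$ that converts the private aggregate back to real numbers is a deterministic, invertible post-processing step, so by the data-processing inequality for R\'enyi divergence it leaves the RDP guarantee unchanged; hence it suffices to bound the RDP of $s\cdot f(D) + Z$ with $Z\sim\operatorname{Sk}_{0,\,s^2\mu}$.

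For this I would apply Corollary~\ref{corollary:multidim-rdp} with the substitution $\Delta_1\mapsto s\Delta_1$, $\Delta_2\mapsto s\Delta_2$, $\mu\mapsto s^2\mu$ in \eqref{eq:multidim-rdp}, and then simplify the powers of $s$: the leading term becomes $\frac{\alpha(s\Delta_2)^2}{2s^2\mu}=\frac{\alpha\Delta_2^2}{2\mu}$, the first branch of the minimum becomes $\frac{(2\alpha-1)(s\Delta_2)^2+6s\Delta_1}{4(s^2\mu)^2}=\frac{(2\alpha-1)\Delta_2^2}{4s^2\mu^2}+\frac{3\Delta_1}{2s^3\mu^2}$, and the second branch becomes $\frac{3s\Delta_1}{2s^2\mu}=\frac{3\Delta_1}{2s\mu}$. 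Collecting these three expressions yields exactly \eqref{eq:scaled-multidim-rdp}.

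I do not expect any genuine obstacle here — the substance is the bookkeeping of the exponents of $s$ above. The only point that deserves a sentence of care is that Corollary~\ref{corollary:multidim-rdp} ultimately invokes Theorem~\ref{thm:mainSK}, whose hypotheses require $\alpha$ and the per-coordinate sensitivities to be integers: $\alpha$ is untouched by scaling, and per-coordinate integrality holds precisely because the scaled query is stochastically rounded, while any mild inflation of $\Delta_2$ introduced by the rounding is accounted for separately through the bounded-rounded-norm analysis (\propositionref{prop:bounded-rounded-norm}). If one instead reads the corollary purely as a statement about a query whose integer sensitivities after scaling are $\Delta_1,\Delta_2$, then the change of variables above is the entire proof.
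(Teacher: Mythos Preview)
Your proposal is correct and matches the paper's approach exactly: the paper presents this as a ``simple corollary based on Cor.~\ref{corollary:multidim-rdp}'' obtained by re-parameterizing the variance as $s^2\mu$, and your substitution $\Delta_1\mapsto s\Delta_1$, $\Delta_2\mapsto s\Delta_2$, $\mu\mapsto s^2\mu$ together with the algebraic simplification of the powers of $s$ is precisely that re-parameterization carried out in detail. The additional remarks you make about post-processing and the integrality hypotheses are accurate and add clarity, but the paper does not spell them out.
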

As $s$ increases, the RDP of scaled Skellam rapidly approaches that of Gaussian as the second term above approaches 0, suggesting that under practical regimes with moderate compression bit-width, Skellam should perform competitively compared to Gaussian.  
Another aspect worth noting is that rounding vector coordinates from reals to integers can inflate the $\ell_2$-sensitivity $\Delta_2$, and thus more noise is required for the same privacy. 
To this end, we leverage the \textit{conditional rounding} procedure introduced in~\cite{ddg} to obtain a bounded norm on the scaled and rounded client vector:
\begin{proposition}[Norm of stochastically rounded vector~\cite{ddg}] \label{prop:bounded-rounded-norm}
Let $\tilde x$ be a stochastic rounding of vector $x \in \mathbb R^d$ to the integer grid $\mathbb Z^d$. Then, for $\beta \in (0, 1)$, we have 
\begin{equation}
\mathbb P\left[ \|\tilde{x}\|_2^2 \le \|x\|_{2}^{2}+ d / 4 + \sqrt{2 \log (1 / \beta)} \cdot\left(\|x\|_{2} + \sqrt{d} / 2 \right) \right] \ge 1 - \beta.
\end{equation}
\end{proposition}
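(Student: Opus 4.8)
The plan is to treat $\|\tilde x\|_2^2 = \sum_{i=1}^d \tilde x_i^2$ as a sum of independent bounded random variables and split it into its mean plus a Hoeffding deviation. First I would fix notation for stochastic rounding: writing $x_i = \lfloor x_i\rfloor + p_i$ with $p_i\in[0,1)$, the rounding sets $\tilde x_i = \lfloor x_i\rfloor + 1$ with probability $p_i$ and $\tilde x_i = \lfloor x_i\rfloor$ otherwise, independently across coordinates. A one-line computation gives $\mathbb{E}[\tilde x_i] = x_i$ (unbiasedness) and $\mathbb{E}[\tilde x_i^2] - x_i^2 = p_i(1-p_i) \le 1/4$, so summing over $i$ yields $\mathbb{E}[\|\tilde x\|_2^2] \le \|x\|_2^2 + d/4$. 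This already accounts for the first two terms of the claimed bound, so it remains to control the fluctuation of $\|\tilde x\|_2^2$ about its mean.

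Second, I would bound the range of each independent summand $\tilde x_i^2$. It is supported on the two values $\lfloor x_i\rfloor^2$ and $(\lfloor x_i\rfloor+1)^2$, so its range equals $|2\lfloor x_i\rfloor + 1|$. Since $\lfloor x_i\rfloor \in (x_i-1,\,x_i]$, we get $2\lfloor x_i\rfloor + 1 \in (2x_i-1,\,2x_i+1]$ and hence $|2\lfloor x_i\rfloor + 1| \le 2|x_i| + 1$ (this is the one place negative coordinates need a moment's care). Summing the squared ranges, $\sum_i (2|x_i|+1)^2 = 4\|x\|_2^2 + 4\|x\|_1 + d \le 4\|x\|_2^2 + 4\sqrt{d}\,\|x\|_2 + d = (2\|x\|_2 + \sqrt{d})^2$, where the inequality is Cauchy–Schwarz in the form $\|x\|_1 \le \sqrt{d}\,\|x\|_2$.

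Third, apply Hoeffding's inequality to $S := \sum_i \tilde x_i^2$: for $t>0$, $\mathbb{P}[S - \mathbb{E} S \ge t] \le \exp\!\big(-2t^2 / (2\|x\|_2 + \sqrt{d})^2\big)$. Setting the right-hand side equal to $\beta$ gives $t = \sqrt{2\log(1/\beta)}\,\big(\|x\|_2 + \sqrt{d}/2\big)$, and combining with $\mathbb{E} S \le \|x\|_2^2 + d/4$ shows that with probability at least $1-\beta$ we have $\|\tilde x\|_2^2 \le \|x\|_2^2 + d/4 + \sqrt{2\log(1/\beta)}\,(\|x\|_2 + \sqrt{d}/2)$, which is exactly the statement.

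The argument is essentially routine, so there is no real obstacle; the only points needing attention are the range bound $|2\lfloor x_i\rfloor + 1| \le 2|x_i| + 1$ for coordinates of either sign, and the Cauchy–Schwarz step that converts the $\ell_1$ term naturally produced by Hoeffding into the purely $\ell_2$ form appearing in the proposition — it is precisely this step that makes the two factors of $\|x\|_2 + \sqrt{d}/2$ align. A Bernstein-type bound could sharpen the constants slightly, but Hoeffding already matches the target exactly, so I would not pursue that.
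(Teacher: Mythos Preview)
The paper does not prove this proposition; it is quoted verbatim from~\cite{ddg} and used as a black box, so there is no ``paper's own proof'' to compare against. Your argument is correct and is in fact the standard one: unbiasedness gives $\mathbb{E}[\tilde x_i^2]=x_i^2+p_i(1-p_i)\le x_i^2+1/4$, the squared coordinate $\tilde x_i^2$ is a two-point random variable with range $|2\lfloor x_i\rfloor+1|\le 2|x_i|+1$, Cauchy--Schwarz converts $\|x\|_1$ to $\sqrt{d}\,\|x\|_2$ so that the sum of squared ranges is at most $(2\|x\|_2+\sqrt{d})^2$, and Hoeffding then delivers exactly the stated deviation term. The two points you flag as needing care (the sign case for the range bound and the $\ell_1\to\ell_2$ conversion) are handled correctly.
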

\vspace{-0.6em}
Conditional rounding is thus defined as retrying the stochastic rounding on $x_i$ until $\|\tilde{x}_i\|^2_2$ is within the probabilistic bound above (which also gives the inflated sensitivity $\tilde{\Delta}_2$). We can then add Skellam noise to the aggregate $\sum_i \tilde{x}_i$ according to $\tilde{\Delta}_2$ before undoing the quantization (unscaling). Note that a larger scaling $s$ before rounding reduces the norm inflation and the extra noise needed (Fig.~\ref{fig:accounting-multi-dim-floats} right).

\textbf{Skellam for distributed DP with secure aggregation}\quad
A stronger notion of privacy in FL can be obtained via the distributed DP model~\cite{ddg} that leverages secure aggregation (SecAgg~\cite{secagg}). 
The fact that the Skellam distribution is closed under summation allows us to easily extend from central DP to distributed DP. Under this model, the client vectors are quantized as in central DP model, but the Skellam noise is now added locally with variance $\mu / n$. Then, the noisy client updates are summed via SecAgg ($b$ bits per coordinate for field size $\smash{2^b}$) which only reveals the noisy aggregate to the server.
While the local noise might be insufficient for local DP guarantees, the aggregated noise at the server provides privacy and utility comparable to the central DP model, thus removing trust away from the central aggregator.
Note that the modulo operations introduced by SecAgg does not impact privacy as it can be viewed as a post-processing of an already differentially private query.

We remark on several properties of the distributed Skellam compared to the distributed discrete Gaussian (DDGauss~\cite{ddg}). (1) DDGauss is not closed under summation, and the divergence between discrete Gaussians can lead to notable privacy degradation in settings such as quantile estimation~\cite{andrew2019differentially} and federated analytics~\cite{fedanalytics} with sufficiently large number of clients and small local noises (see also the left side of Fig.~\ref{fig:accounting-scalars} and Fig.~\ref{fig:accounting-multi-dim-floats}). While scaling mitigates this issue, it also requires additional bit-width which makes Skellam attractive under tight communication constraints. (2) Sampling from Skellam only requires sampling from Poisson, for which efficient implementations are widely available in numerical software packages. While efficient discrete Gaussian sampling has also been explored in the lattice-based cryptography community (e.g.,~\cite{roy2013high, dwarakanath2014sampling, rossi2019simple}), we believe the accessibility of Skellam samplers would help facilitate the deployment of DP to FL settings with mobile and edge devices. See Appendix~\ref{sec:appendix-practical-remarks} for more discussion. (3) In practice where $s \gg 1$ (dictated by bit-width $b$), both Skellam (cf.~Cor.~\ref{corollary:scaled-multidim-rdp}) and DDGauss (with an exponentially small divergence) quickly approaches Gaussian under RDP, and any differences will be negligible (Fig.~\ref{fig:accounting-multi-dim-floats}).

\section{Empirical Evaluation} \label{sec:experiments}

In this section, we empirically evaluate the Skellam mechanism on two sets of experiments: distributed mean estimation and federated learning. In both cases, we focus on the distributed DP model, but note that the Skellam mechanism can be easily adapted to the central DP setting as discussed in the earlier section.
Unless otherwise stated, we use RDP accounting for all experiments due to the high-dimensional data and the ease of composition (Section~\ref{sec:skellam}). To obtain $\Delta_1$ for Skellam RDP, we note that $\smash{\Delta_1 \le \Delta_2 \cdot \min(\sqrt{d}, \Delta_2)}$ since $\smash{\Delta_1 \le \sqrt d\Delta_2}$ in general and $\smash{\Delta_1 \le \Delta_2^2}$ for integers.
 
Under the distributed DP model, we also introduce a random orthogonal transformation~\cite{konevcny2016federated, agarwal2018cpsgd, ddg} before discretizing and aggregating the client vectors (which can be reverted after the aggregation); this makes the vector coordinates sub-Gaussian and helps spread the magnitudes of the vector coordinates across all dimensions, thus reducing the errors from quantization and potential wrap-around from SecAgg modulo operations. Moreover, by approximating the sub-exponential tail of the Skellam distribution as sub-Gaussian, we can derive a heuristic for choosing $s$ following~\cite{ddg} based on a bound on the variance $\tilde\sigma^2$ of the aggregated signal, as $\tilde\sigma^2 \le c^2n^2 / d + n / (4s^2) + \mu$. We choose $s$ such that $2k\tilde\sigma$ are bounded within the SecAgg field size $2^b$, where $k$ is a small constant. 

Algorithm~\ref{alg:agg-procedure} summarizes the aggregation procedure for the distributed Skellam mechanism via secure aggregation as well as the parameters used for the experiments. In summary, we have an $\ell_2$ clip norm $c>0$; per-coordinate bit-width $b$; target central noise variance $\mu>0$; number of clients $n$; signal bound multiplier $k > 0$; and rounding bias $\beta\in[0,1)$.
We fix $\beta = e^{-1/2}$ for all experiments.
Note that the per-coordinate bit-width $b$ is for the aggregated sum as it determines the field size of SecAgg. For federated learning, we also consider the number of rounds $T$ and the total number of clients $N$ (thus the uniform sampling ratio $q = n/N$ at every round). 
Our experiments are implemented in Python, TensorFlow Privacy~\cite{tf-privacy}, and TensorFlow Federated~\cite{ingerman2019tff}. See also Appendix for additional results and more details on the experimental setup.

\begin{algorithm}[t]
  \caption{Aggregation Procedure for the Distributed Skellam Mechanism}
  \label{alg:agg-procedure}
\begin{algorithmic}
    \STATE \textbf{Inputs:} Private vector $x_i \in \mathbb{R}^{\bar{d}}$ for each client $i$; $\ell_2$ clip norm $c>0$; Bit-width $b$; Target central noise variance $\mu>0$; Number of clients $n$; Signal bound multiplier $k > 0$; Bias $\beta\in[0,1)$.
    \STATE \textbf{Shared randomness:} $d\times d$ diagonal matrix $D$ with uniformly random $\{-1,+1\}$ values, where $d \ge \bar{d}$ is the nearest power of 2.
    \STATE \textbf{Shared scale:} Obtain scaling factor $s$ such that $2^b = 2k \tilde\sigma = 2k \sqrt{ c^{2} n^{2} / d + n / ({4s^2}) + \mu}$.
    \PROCEDURE{ClientProcedure}{$ x_i, s, D$}
    \STATE Clip and scale vector $\hat{x}_i = s \cdot \min(1, c / \|x_i\|_2) \cdot x_i$, and pad to $\tilde d$ dimensions with zeros.
    \STATE Random rotation: $\check{x}_i = \tilde H_{d} D\hat{x}_i$ where $\tilde H_{d} = \frac{1}{\sqrt{d}} H_{d}$ is the normalized $d\times d$ Hadamard matrix.
    \REPEAT[conditional stochastic rounding]
        \STATE Stochastically round the coordinates of of $\check{x}_i$ to the integer grid to produce $\tilde x_i$
        \UNTIL{$\|\tilde{x}_i\|_2^2 \le \min \left\{ \left(sc + \sqrt{d}\right)^2,  s^2c^2 + d/4 + \sqrt{2\log(1/\beta)} \cdot \left(sc + \sqrt{d} / 2\right) \right\}$}.
    \STATE Local noising: Sample noise vector $y_i \in \mathbb Z^d$ where each entry is sampled from $\operatorname{Sk}_{0, s^2\mu/n}$.
    \STATE \textbf{return} $z_i = \tilde{x}_i + y_i$ under the SecAgg protocol with modulo bit-width $b$.
    \ENDPROCEDURE
    \PROCEDURE[$z$ is the modular sum of $z_i$ under bit-width $b$]{ServerProcedure}{$z, s, D$}
    \STATE \textbf{return} $\bar x = \frac{1}{s} D\tilde{H}^{\top}_dz$, with $\bar x \approx \sum_i x_i \in \mathbb R^d$.
    \ENDPROCEDURE
\end{algorithmic}
\end{algorithm}

\subsection{Distributed Mean Estimation (DME)}

We first consider DME as the generalization of (single round) FL. We randomly generate $n$ client vectors $X = \{x_i\}^n_{i=1}$ from the $d$-dimensional $\ell_2$ sphere with radius $c=10$, and compute the true mean $\widehat x = \frac{1}{n}\sum_i^n x_i$.
We then compute the private estimate of $\widehat x$ with the distributed Skellam mechanism (Algorithm~\ref{alg:agg-procedure}) as $\bar x$.
For a strong baseline, we use the analytic Gaussian mechanism~\cite{balle2018improving} with tight accounting (see also Figure~\ref{fig:accounting-scalars}).
In Figure~\ref{fig:dme-results}, we plot the MSE as $\|\widehat{x} - \bar{x}\|^2_2 / d$ with 95\% confidence interval (small shaded region) over 10 dataset initializations across different values of $b$, $d$, and $n$.
Results demonstrate that Skellam can match Gaussian even with $n=10000$ clients as long as the bit-width is sufficient. 
We emphasize that the communication cost $b$ depends logarithmically on $n$, and to put numbers into context, Google’s production next-word prediction models~\cite{hard2018federated, ramaswamy2019federated} use $n\le 500$ and the production DP language model~\cite{ramaswamy2020training} uses $n=20000$.

\begin{figure}[t]
    \centering
    \includegraphics[width=\linewidth,bb=0 0 713 158]{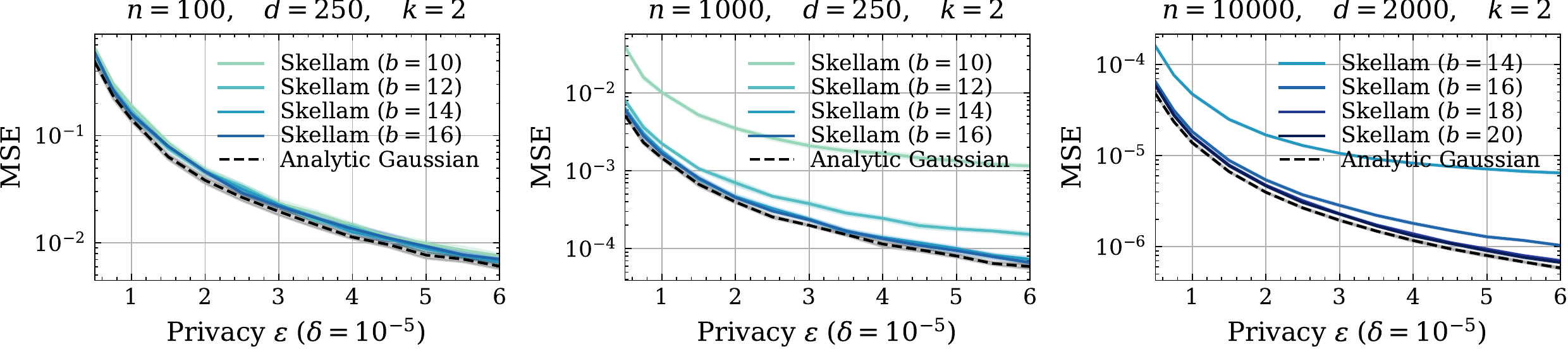}
    \vspace{-1.5em}
    \caption{Distributed mean estimation with the distributed Skellam mechanism.}
    \vspace{-0.5em}
    \label{fig:dme-results}
\end{figure}

\subsection{Federated Learning}

\textbf{Setup}\quad
We evaluate on three public federated datasets with real-world characteristics: Federated EMNIST~\cite{cohen2017emnist}, Shakespeare~\cite{mcmahan2017communication, caldas2018leaf}, and Stack Overflow next word prediction (SO-NWP~\cite{tffauthors2019}). EMNIST is an image classification dataset for hand-written digits and letters; Shakespeare is a text dataset for next-character-prediction based on the works of William Shakespeare; and SO-NWP is a large-scale text dataset for next-word-prediction based on user questions/answers from stackoverflow.com.
We emphasize that all datasets have natural client heterogeneity that are representative of practical FL problems: the images in EMNIST are grouped the writer of the handwritten digits, the lines in Shakespeare are grouped by the speaking role, and the sentences in SO-NWP are grouped by the corresponding Stack Overflow user.
We train a small CNN with model size $\bar{d} < 2^{20}$ for EMNIST and use the recurrent models defined in~\cite{reddi2020adaptive} for Shakespeare and SO-NWP.
The hyperparameters for the experiments follow those from~\cite{ddg, andrew2019differentially, kairouz2021practical, reddi2020adaptive} and tuning is limited. For EMNIST, we follow~\cite{ddg} and fix $c=0.03$, $n=100$, $T=1500$, client learning rate $\eta_\text{client}=0.32$, server learning rate $\eta_\text{server} = 1$, and client batch size $m=20$. 
For Shakespeare, we follow~\cite{andrew2019differentially} and fix $n=100$, $T=1200$, $\eta_\text{client}=1$, $\eta_\text{server}=0.32$, and $m=4$, and we sweep $c\in\{0.25, 0.5\}$.
For SO-NWP, we follow~\cite{kairouz2021practical} and fix $c=0.3$, $n=100$, $T=1600$, $\eta_\text{client}=0.5$, and $m=16$, and we sweep $\eta_\text{server} \in \{0.3, 1\}$ and limit max examples per client to 256.
In all cases, clients train for 1 epoch on their local datasets, and the client updates are weighted uniformly (as opposed to weighting by number of examples). See Appendix for more results and full details on datasets, models, and hyperparameters.

\textbf{Results}\quad
Figure~\ref{fig:fl-results} summarizes the FL experiments. For EMNIST and Shakespeare, we report the average test accuracy over the last 100 rounds. For SO-NWP, we report the top-1 accuracy (without padding, out-of-vocab, or begining/end-of-sentence tokens) on the test set. The results indicate that Skellam performs as good as Gaussian despite relying on generic RDP amplification via sampling~\cite{zhu2019poission} (cf. Fig.~\ref{fig:accounting-multi-dim-floats}) and that Skellam matches DDG consistently under realistic regimes. This bears significant practical relevance given the advantages of Skellam over DDG in real-world deployments.

\begin{figure}[t]
    \centering
    \includegraphics[width=0.32\linewidth,bb=0 0 245 182]{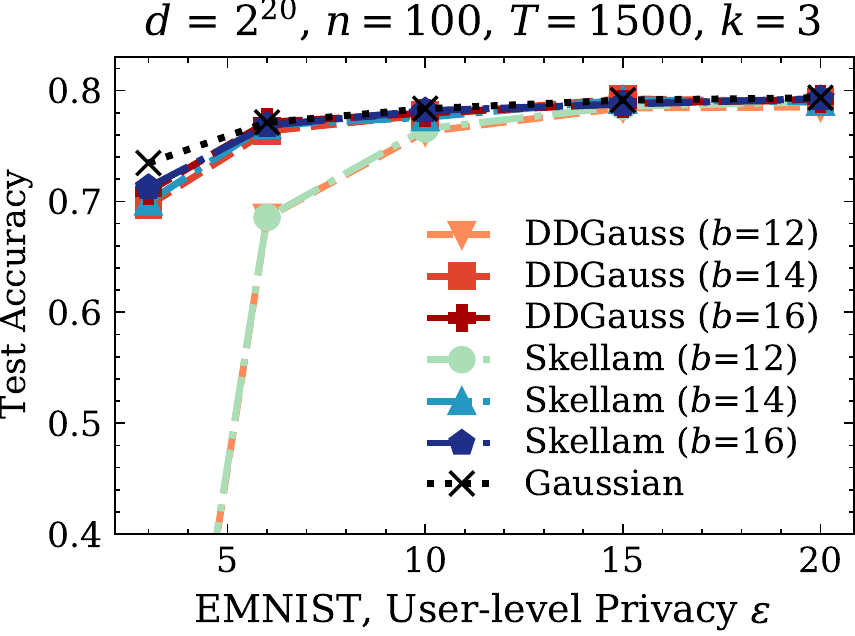}
    \includegraphics[width=0.32\linewidth,bb=0 0 245 182]{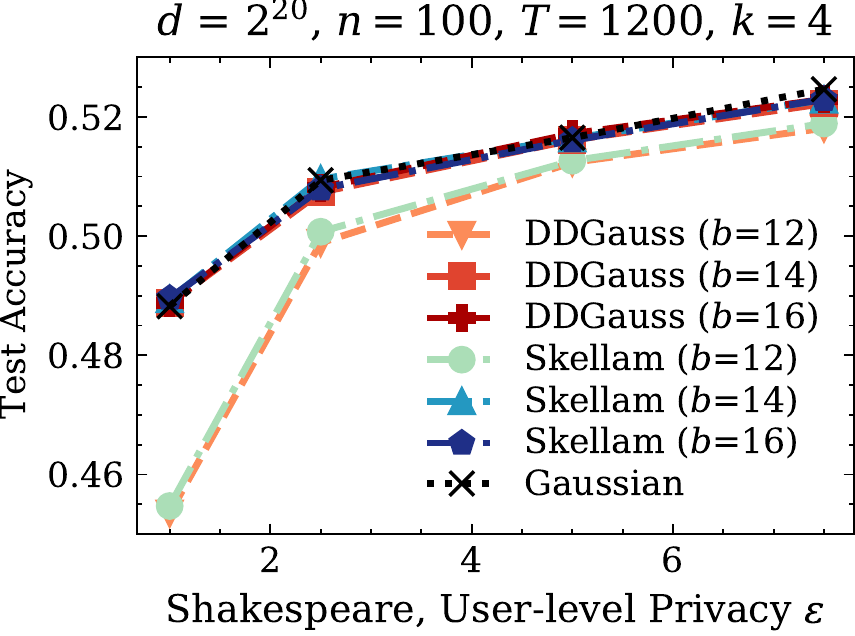}
    \includegraphics[width=0.32\linewidth,bb=0 0 245 182]{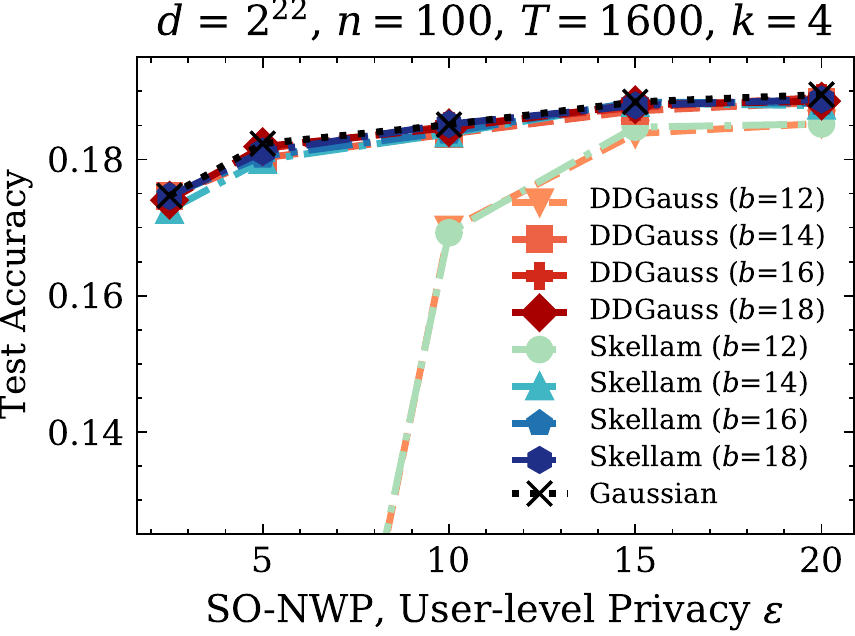}
    \caption{Federated learning with the distributed Skellam mechanism. DDGauss: Distributed Discrete Gaussian~\cite{ddg}. \textbf{Left / Middle / Right}: Test accuracies on EMNIST / Shakespeare / Stack Overflow NWP across different $\varepsilon$ and $b$. $\delta$ is set to $1/N$, $10^{-6}$, $10^{-6}$, respectively.
    For Shakespeare, privacy is reported with a hypothetical population size $N=10^6$.}
    \label{fig:fl-results}
    \vspace{-1em}
\end{figure}

\section{Conclusion} \label{sec:conclusion}
We have introduced the multi-dimensional Skellam mechanism for federated learning. We analyzed the Skellam mechanism through the lens of approximate DP, privacy loss distributions, and R\'enyi divergences, and derived a sharp RDP bound that enables Skellam to match Gaussian and discrete Gaussian in practical settings as demonstrated by our large-scale experiments.
Since Skellam is closed under summation and efficient samplers are widely available, it represents an attractive alternative to distributed discrete Gaussian as it easily extends from the central DP model to the distributed DP model. Being a discrete mechanism can also bring potential communication savings over continuous mechanisms and make Skellam less prone to attacks that exploit floating-point arithmetic on digital computers.
Some interesting future work includes: (1) our scalar PLD analysis for Skellam suggests room for improvements on our multi-dimensional analysis via a complete PLD characterization, and (2) our results on FL may be further improved via a targeted analysis for RDP amplification via sampling akin to~\cite{mironov2019r}.
Overall, this work is situated within the active area of private machine learning and aims at making ML more trustworthy. One potential negative impact is that our method could be (deliberately or inadvertently) misused, such as sampling the wrong noise or using a minuscule scaling factor, to provide non-existent privacy guarantees for real users' data. We nevertheless believe our results have positive impact as they facilitate the deployment of differential privacy in practice.

\clearpage 

\ifnum\neuripsfinal=1
\section*{Funding Transparency Statement}

The authors were employed at and directly supported by Google. No third party funding was received by any of the authors to pursue this work.
\fi

{\small
\bibliographystyle{plain}
\bibliography{references}
}

\clearpage

\appendix

\section{Proof of Lemma \ref{lemma:phibound}}

Before moving forward we state the following lemma which follows via a simple calculation.

\begin{lemma}
\label{lemma:deltabound}
For any positive real number $\nu \geq 1/2$ and any $x \geq 0$, we have that 
\[ \delta_{0}(\nu,x) \geq \frac{\nu - 1/2}{x}\]
\[ \delta_{2}(\nu, x) \leq \min \left( \frac{\nu-1/2}{x}\left(1 + \frac{1}{2x}\right) + \frac{1}{2x^2}, \frac{\nu-1/2}{x} + \frac{1}{2x} \right).\]
Further for any positive reals $\nu_1 \geq \nu_2, x$ we have that
\[ \delta_2(\nu_1, x) - \delta_0(\nu_2, x) \leq \frac{\nu_1 - \nu_2}{x} + \frac{1}{2x}\min\left( \frac{\nu_1 - \nu_2 + 1}{x} , 1 \right)\]
where $\delta_{\alpha}(\nu, x) \defeq \frac{\nu-1/2}{x} + \frac{\nu+(\alpha-1)/2}{2x \sqrt{(\nu+(\alpha-1)/2)^2 + x^2}}$ as defined in Lemma \ref{lemma:BesselBounds}. 
\end{lemma}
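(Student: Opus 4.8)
The plan is to reduce all three displays to elementary facts about the single-variable function $g(a) \defeq a/\sqrt{a^2+x^2}$, after peeling off the common leading term $(\nu-1/2)/x$ that appears in every $\delta_\alpha$. Throughout I take $x>0$; the formal case $x=0$ is degenerate and is never used. First I would record the explicit forms
\[ \delta_0(\nu,x) = \frac{\nu-1/2}{x} + \frac{1}{2x}\,g(\nu-1/2), \qquad \delta_2(\nu,x) = \frac{\nu-1/2}{x} + \frac{1}{2x}\,g(\nu+1/2), \]
and note the three properties of $g$ on $[0,\infty)$ that drive everything: $g\ge 0$; $g\le 1$ (since $a\le\sqrt{a^2+x^2}$); and — the only computation that matters — $g$ is $\tfrac1x$-Lipschitz, because $g'(a) = x^2/(a^2+x^2)^{3/2} \le 1/x$, which is just $x^3 \le (a^2+x^2)^{3/2}$. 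The first claim $\delta_0(\nu,x) \ge (\nu-1/2)/x$ is then immediate from $g\ge 0$.

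For the second claim I would bound $g(\nu+1/2)$ in the two available ways. Using $g(\nu+1/2)\le 1$ gives $\delta_2(\nu,x) \le (\nu-1/2)/x + 1/(2x)$. Using $g(\nu+1/2) = (\nu+1/2)/\sqrt{(\nu+1/2)^2+x^2} \le (\nu+1/2)/x$ gives $\delta_2(\nu,x) \le (\nu-1/2)/x + (\nu+1/2)/(2x^2)$, and a one-line regrouping shows this equals $\frac{\nu-1/2}{x}\big(1+\frac{1}{2x}\big) + \frac{1}{2x^2}$. Taking the minimum of the two yields exactly the stated bound.

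For the third claim, subtracting the explicit forms gives
\[ \delta_2(\nu_1,x) - \delta_0(\nu_2,x) = \frac{\nu_1-\nu_2}{x} + \frac{1}{2x}\big( g(\nu_1+1/2) - g(\nu_2-1/2) \big), \]
so it remains only to bound the parenthesised difference by $\min\big((\nu_1-\nu_2+1)/x,\,1\big)$. The bound by $1$ is clear from $0\le g\le 1$. For the bound by $(\nu_1-\nu_2+1)/x$, observe that $\nu_1+1/2 \ge \nu_2-1/2$ (since $\nu_1\ge\nu_2$), so the mean value theorem together with $g'\le 1/x$ gives $g(\nu_1+1/2) - g(\nu_2-1/2) \le \tfrac1x\big((\nu_1+1/2)-(\nu_2-1/2)\big) = (\nu_1-\nu_2+1)/x$, as desired.

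I do not expect a genuine obstacle: the whole lemma is bookkeeping once the Lipschitz estimate $g'\le 1/x$ is noted, and that is a one-line algebraic check. The only points requiring care are keeping straight which shifted argument ($\nu-1/2$ for $\delta_0$, $\nu+1/2$ for $\delta_2$) enters where, and confirming the sign condition $\nu_1+1/2 \ge \nu_2-1/2$ that legitimises applying the Lipschitz bound on the correct interval.
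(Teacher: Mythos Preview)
Your proposal is correct and follows essentially the same approach as the paper: both reduce everything to the elementary properties of $a \mapsto a/\sqrt{a^2+x^2}$ (the paper normalizes to $y \mapsto y/\sqrt{1+y^2}$), namely that it is nonnegative, bounded by $1$, and $1/x$-Lipschitz. The paper phrases the Lipschitz step via concavity plus $f'(0)=1$ whereas you use the mean value theorem with $g'\le 1/x$, but these are the same observation; your write-up is in fact more detailed than the paper's, which dispatches the first two displays in a single sentence.
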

\begin{proof}
The first inequality follows easily from the definition of $\delta$ and by noting that the function $\frac{x}{\sqrt{1+x^2}} \leq \min(x,1)$. For the second inequality, by the definition of $\delta$ we have that 
\[ \delta_2(\nu_1, x) - \delta_0(\nu_2, x) = \frac{\nu_1 - \nu_2}{x} + \frac{1}{2x}\left( \frac{\nu_1+1/2}{\sqrt{(\nu_1 + 1/2)^2 + x^2}} - \frac{\nu_2 - 1/2}{\sqrt{(\nu_2 - 1/2)^2 + x^2}} \right).\]
  Now, consider the scalar function $f(x) = \frac{x}{\sqrt{1+x^2}}$ for $x \geq 0$. Note that the function is monotonically increasing, concave and has values between $[0,1]$ with $f'(x) \leq f'(0) = 1$. Putting these facts together we have that for any $x_1 \geq x_2$
\[ \frac{x_1}{\sqrt{1+x_1^2}} - \frac{x_2}{\sqrt{1+x_2^2}} \leq \min(x_1-x_2, 1).\]
 \end{proof}
 
Using Lemma \ref{lemma:BesselBounds} and Lemma \ref{lemma:deltabound} we have the following Lemma 

\begin{lemma}
\label{lemma:BesselBigBounds}
Given two non-negative integers $\nu, \Delta$ we have that 
\[ \log\left(\frac{I_{\nu}}{I_{\nu + \Delta}}\right) \leq \sum_{j=\nu+1}^{j=\nu + \Delta} \arcsinh(\delta_2(j,x)) \leq  \min\left(\frac{\Delta^2 + 2 \nu \Delta}{2x} \left(1 + \frac{1}{2x}\right) + \frac{\Delta}{2x^2}, \frac{\Delta^2 + 2 \nu \Delta}{2x} + \frac{\Delta}{2x}\right) \]

\[ \log\left(\frac{I_{\nu}(x)}{I_{\nu + \Delta}(x)}\right) \geq \sum_{j=\nu+1}^{j=\nu + \Delta} \arcsinh(\delta_0(j,x))  \]
\end{lemma}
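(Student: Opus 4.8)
The plan is to reduce the statement to a telescoping sum of consecutive Bessel ratios and then invoke the two-sided bound of Lemma~\ref{lemma:BesselBounds} term by term. Concretely, write
\[
\log\!\left(\frac{I_\nu(x)}{I_{\nu+\Delta}(x)}\right) \;=\; \sum_{j=\nu+1}^{\nu+\Delta}\Bigl(\log I_{j-1}(x)-\log I_j(x)\Bigr).
\]
For each $j$ in this range we have $j\ge \nu+1\ge 1\ge 1/2$, so Lemma~\ref{lemma:BesselBounds} applies and gives the sandwich $\arcsinh(\delta_0(j,x))\le \log I_{j-1}(x)-\log I_j(x)\le \arcsinh(\delta_2(j,x))$. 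Summing over $j$ immediately yields the first (upper) inequality and the last (lower) inequality of the statement, so the only remaining task is to simplify $\sum_{j=\nu+1}^{\nu+\Delta}\arcsinh(\delta_2(j,x))$ into the claimed closed form.

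For that, first use $\arcsinh(y)\le y$ for $y\ge 0$ to pass to $\sum_{j=\nu+1}^{\nu+\Delta}\delta_2(j,x)$, and then apply the pointwise estimate on $\delta_2$ from Lemma~\ref{lemma:deltabound}, namely $\delta_2(\nu,x)\le \min\!\bigl(\frac{\nu-1/2}{x}(1+\frac{1}{2x})+\frac{1}{2x^2},\ \frac{\nu-1/2}{x}+\frac{1}{2x}\bigr)$. Since $\sum_j\min(a_j,b_j)\le\min\bigl(\sum_j a_j,\sum_j b_j\bigr)$, it suffices to evaluate the two candidate sums separately, and the only non-trivial ingredient is the arithmetic series $\sum_{j=\nu+1}^{\nu+\Delta}(j-\tfrac12)=\Delta\nu+\tfrac{\Delta^2}{2}=\tfrac{\Delta^2+2\nu\Delta}{2}$. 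Combining this with $\sum_{j=\nu+1}^{\nu+\Delta}1=\Delta$ gives $\sum_j\tfrac{j-1/2}{x}=\tfrac{\Delta^2+2\nu\Delta}{2x}$, while the $\tfrac{1}{2x^2}$ and $\tfrac{1}{2x}$ terms each contribute a factor $\Delta$; the two resulting expressions are exactly the two arguments of the outer $\min$ in the claimed bound.

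There is no deep obstacle here: the result is essentially bookkeeping built on Lemmas~\ref{lemma:BesselBounds} and~\ref{lemma:deltabound}. The points that need a little care are (i) checking that the index range keeps $j\ge 1/2$ so that Lemma~\ref{lemma:BesselBounds} is valid, together with the degenerate case $\Delta=0$ where all three quantities are empty sums equal to $0$; (ii) pulling the $\min$ out of the sum correctly via $\sum\min\le\min\sum$; and (iii) the shift by $\tfrac12$ in the arithmetic series, which is precisely what makes $\sum_{j=\nu+1}^{\nu+\Delta}(j-\tfrac12)$ collapse to the clean form $\tfrac{\Delta^2+2\nu\Delta}{2}$ (rather than picking up an extra $\tfrac{\Delta}{2}$) and hence match the stated bound exactly.
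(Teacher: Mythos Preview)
Your proposal is correct and matches the paper's intended approach: the paper does not give an explicit proof but simply states that the lemma follows from Lemmas~\ref{lemma:BesselBounds} and~\ref{lemma:deltabound}, which is exactly the telescoping-plus-termwise-bounding argument you carry out. The arithmetic series computation $\sum_{j=\nu+1}^{\nu+\Delta}(j-\tfrac12)=\tfrac{\Delta^2+2\nu\Delta}{2}$ and the use of $\arcsinh(y)\le y$ together with $\sum\min\le\min\sum$ are precisely the missing details, and your index check $j\ge\nu+1\ge 1\ge 1/2$ correctly ensures Lemma~\ref{lemma:BesselBounds} applies.
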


We are now ready to provide the proof of Lemma \ref{lemma:phibound}.

\begin{proof}[Proof of Lemma \ref{lemma:phibound}]

We prove the statement for $\Delta \geq 0$, a similar analysis applies for the case $\Delta \leq 0$ by switching $X$ to $-X$. 

Since \ref{lemma:BesselBounds} applies only in the case when $\nu$ is positive, we need to handle the negative case via noting that for integer $\nu$ $I_{\nu}(x) = I_{|\nu|}(x)$. This necessitates the requirement for multiple cases. We begin with the first case

\noindent \textbf{Case 1 - $X \geq \alpha \Delta$}

\noindent In this case replacing setting $Y = X-\alpha \delta$ we get that

\begin{equation}
  \Phi_{X,\alpha,\Delta}(\mu) = \log \left( \frac{I_{Y + (\alpha-1) \Delta}(\mu)}{I_{Y}(\mu)}\left(\frac{I_{Y + (\alpha-1) \Delta}(\mu)}{I_{Y + \alpha \Delta}(\mu)}\right)^{\alpha - 1} \right),
\end{equation}
where we know that $Y \geq 0$. Now consider the following calculation.

\begin{align*}
    \Phi_{X, \alpha, \Delta}(\mu) &\defeq \left(\alpha-1\right) \left( \sum_{i=Y+(\alpha-1)\Delta+1}^{Y+\alpha\Delta} \log\left(\frac{I_{i-1}(\mu)}{I_{i}(\mu)}\right) \right) - \sum_{i=Y+1}^{Y+(\alpha-1)\Delta} \log\left(\frac{I_{i-1}(\mu)}{I_{i}(\mu)}\right) \\ 
    &=\sum_{j=0}^{\alpha-2} \left( \sum_{i=Y+(\alpha-1)\Delta+1}^{Y+\alpha\Delta} \log\left(\frac{I_{i-1}(\mu)}{I_{i}(\mu)}\right) - \sum_{i=Y + j\Delta + 1}^{Y + j\Delta + \Delta} \log\left(\frac{I_{i-1}(\mu)}{I_{i}(\mu)}\right)  \right) \\
    &=\sum_{j=0}^{\alpha-2} \left( \sum_{i=Y + j\Delta + 1}^{Y + j\Delta + \Delta} \left( \log\left(\frac{I_{i-1+(\alpha-1-j)}(\mu)}{I_{i + (\alpha-1-j)\Delta}(\mu)}\right) - \log\left(\frac{I_{i-1}(\mu)}{I_{i}(\mu)}\right)  \right) \right)\\
    &\leq\sum_{j=0}^{\alpha-2} \left( \sum_{i=Y + j\Delta + 1}^{Y + j\Delta + \Delta} \big(
    \arcsinh(\delta_2(i+(\alpha - 1 - j)\Delta, \mu)) - \arcsinh(\delta_0(i, \mu)) \big)
    \right)\\
    &\leq\sum_{j=0}^{\alpha-2} \left( \sum_{i=Y + j\Delta + 1}^{Y + j\Delta + \Delta} \big(
    \delta_2(i+(\alpha - 1 - j)\Delta, \mu) - \delta_0(i, \mu) \big)
    \right) \\
    &\leq\sum_{j=0}^{\alpha-2}  \left(\frac{(\alpha - 1 - j)\Delta^2}{\mu}  + \min\left(\frac{(\alpha - 1 - j)\Delta^2 + \Delta}{2\mu^2}, \frac{\Delta}{2\mu}\right)
    \right) \\
    &=\frac{\alpha(\alpha-1) \Delta^2}{2\mu} +  \min\left(\frac{\alpha(\alpha - 1)\Delta^2 + 2(\alpha-1) \Delta}{4\mu^2}, \frac{(\alpha-1)\Delta}{2\mu}\right),
\end{align*}
where the first inequality follows from Lemma \ref{lemma:BesselBounds} and the second inequality follows from the fact that for all $0 \leq x \leq y$ we have that $\arcsinh(y) - \arcsinh(x) \leq y-x$ and the third inequality follows from Lemma \ref{lemma:deltabound}.

\noindent \textbf{Case 2 - $X \leq 0$}

\noindent In this case replacing setting $Y = -X$ we get that
\begin{equation}
  \Phi_{X,\alpha,\Delta}(\mu) =  \log \left( \frac{I_{Y + \Delta}(\mu)}{I_{Y + \alpha \Delta}(\mu)}\left(\frac{I_{Y + \Delta}(\mu)}{I_{Y}(\mu)}\right)^{\alpha - 1} \right)
\end{equation}
where we know that $Y \geq 0$. Now consider the following calculation. 
\begin{align*}
    \Phi_{X, \alpha, \Delta}(\mu) &= \sum_{i=Y+ \Delta + 1}^{Y+\alpha\Delta} \log\left(\frac{I_{i-1}(\mu)}{I_{i}(\mu)}\right) -\left(\alpha-1\right) \left( \sum_{i=Y+1}^{Y+\Delta} \log\left(\frac{I_{i-1}(\mu)}{I_{i}(\mu)}\right) \right)  \\ 
    &=\sum_{j=1}^{\alpha-1} \left( \sum_{i=Y+j\Delta+1}^{Y+j\Delta+\Delta} \log\left(\frac{I_{i-1}(\mu)}{I_{i}(\mu)}\right) - \sum_{i=Y + 1}^{Y + \Delta} \log\left(\frac{I_{i-1}(\mu)}{I_{i}(\mu)}\right)  \right) \\
    &=\sum_{j=1}^{\alpha-1} \left( \sum_{i=Y + 1}^{Y + \Delta} \left( \log\left(\frac{I_{i-1 + j \Delta}(\mu)}{I_{i + j\Delta}(\mu)}\right) - \log\left(\frac{I_{i-1}(\mu)}{I_{i}(\mu)}\right)  \right) \right)\\
    &\leq\sum_{j=1}^{\alpha-1} \left( \sum_{i=Y + 1}^{Y + \Delta} \big(
    \arcsinh(\delta_2(i+j\Delta, \mu)) - \arcsinh(\delta_0(i, \mu)) \big)
    \right)\\
    &\leq\sum_{j=1}^{\alpha-1} \left( \sum_{i=Y + 1}^{Y + \Delta} \big(
    \delta_2(i+j\Delta, \mu) - \delta_0(i, \mu) \big)
    \right) \\
    &\leq\sum_{j=1}^{\alpha-1}  \left(\frac{j\Delta^2}{\mu}  + \min\left(\frac{j\Delta^2 + \Delta}{2\mu^2}, \frac{\Delta}{2\mu}\right)
    \right) \\
    &=\frac{\alpha(\alpha-1) \Delta^2}{2\mu} +  \min\left(\frac{\alpha(\alpha - 1)\Delta^2 + 2(\alpha-1) \Delta}{4\mu^2}, \frac{(\alpha-1)\Delta}{2\mu}\right),
\end{align*}
where the first inequality follows from Lemma \ref{lemma:BesselBounds} and the second inequality follows from the fact that for all $0 \leq x \leq y$ we have that $\arcsinh(y) - \arcsinh(x) \leq y-x$ and the third inequality follows from Lemma \ref{lemma:deltabound}.

\noindent \textbf{Case 3 - $X \in [0, \Delta/2]$}

In this case we first note that
\begin{align*}
    \Phi_{X, \alpha, \Delta}(\mu) &= \log \left( \frac{I_{X - \Delta}(\mu)}{I_{X - \alpha \Delta}(\mu)}\left(\frac{I_{X - \Delta}(\mu)}{I_{X}(\mu)}\right)^{\alpha - 1} \right) \\
    &= \log \left( \frac{I_{\Delta - X}(\mu)}{I_{\alpha \Delta - X}(\mu)}\left(\frac{I_{ \Delta -  X}(\mu)}{I_{X}(\mu)}\right)^{\alpha - 1} \right)
\end{align*}

Next consider the following calculation which corresponds to applying Lemma \ref{lemma:BesselBigBounds} to the above expression we get that,
\begin{align*}
    \Phi_{X, \alpha, \Delta}(\mu) &\leq \sum_{j = \Delta-X+1}^{\alpha \Delta - X} \arcsinh(\delta_2(j,\mu)) - (\alpha - 1)*\left(\sum_{j =X+1}^{\Delta - X} \arcsinh(\delta_0(j,\mu))\right) 
\end{align*}
We again intend to use the inequality $\arcsinh(y) - \arcsinh(x) \leq y-x$ 
for $y \geq x$. To this end first note that the number of terms on the left summation in the above inequality are at least as many as the number of terms on the RHS (taking the multiplicity via $\alpha - 1$ into account). Therefore for those terms we can apply the above inequality. For the remaining we simply use the inequality $\arcsinh(x) \leq x$. Therefore we get the following simplification, 
\begin{align*}
    \Phi_{X, \alpha, \Delta}(\mu) &\leq \sum_{j = \Delta-X+1}^{\alpha \Delta - X} \delta_2(j,\mu) - (\alpha - 1)*\left(\sum_{j =X+1}^{\Delta - X} \delta_0(j,\mu)\right). 
\end{align*}
We can now use the upper and lower bounds on $\delta_2, \delta_0$ given by Lemma \ref{lemma:deltabound}. To this end consider the following calculation,
\begin{align*}
&\sum_{j = \Delta-X+1}^{\alpha \Delta - X} \frac{j-1/2}{\mu} - (\alpha - 1)*\left(\sum_{j =X+1}^{\Delta - X} \frac{j-1/2}{\mu}\right) \\
   &= \frac{(\alpha - 1)^2\Delta^2 + 2(\Delta - X)(\alpha - 1) \Delta}{2} - (\alpha-1) \left( \frac{(\Delta - 2X)^2 + 2X(\Delta - 2X)}{2}\right) \\
   &= \frac{\alpha(\alpha-1)\Delta^2}{2\mu}.
\end{align*}
Now a direct application of Lemma \ref{lemma:deltabound} yields
\[\log \left( \frac{I_{\Delta - X}(\mu)}{I_{\alpha \Delta - X}(\mu)}\left(\frac{I_{ \Delta -  X}(\mu)}{I_{X}(\mu)}\right)^{\alpha - 1} \right) \leq \frac{\alpha(\alpha-1)\Delta^2}{2\mu} + \min \left(  \frac{(\alpha+1)(\alpha-1)\Delta^2}{4\mu^2} + \frac{(\alpha-1)\Delta}{2\mu^2} , \frac{(\alpha-1)\Delta}{2\mu}\right).\]

\noindent \textbf{Case 4 - $X \in [\Delta/2, \Delta]$}

In this case we first note that
\begin{align*}
    \Phi_{X, \alpha, \Delta}(\mu) &= \log \left( \frac{I_{X - \Delta}(\mu)}{I_{X - \alpha \Delta}(\mu)}\left(\frac{I_{X - \Delta}(\mu)}{I_{X}(\mu)}\right)^{\alpha - 1} \right) \\
    &= \log \left( \frac{I_{\Delta - X}(\mu)}{I_{\alpha \Delta - X}(\mu)}\left(\frac{I_{ \Delta -  X}(\mu)}{I_{X}(\mu)}\right)^{\alpha - 1} \right)
\end{align*}

Next consider the following calculation which corresponds to applying Lemma \ref{lemma:BesselBigBounds} to the above expression and collecting the terms corresponding to $\frac{\Delta^2 + 2\nu \Delta}{2x}$ in the lemma in this context.
\[
\frac{(\alpha - 1)^2\Delta^2 + 2(\Delta - X)(\alpha - 1) \Delta}{2} + \frac{(2X - \Delta)^2 + 2(\Delta - X)(2X - \Delta)}{2} = \frac{\alpha(\alpha-1)\Delta^2}{2\mu}\]
Now a direct application of Lemma \ref{lemma:BesselBigBounds} yields
\[\log \left( \frac{I_{\Delta - X}(\mu)}{I_{\alpha \Delta - X}(\mu)}\left(\frac{I_{ \Delta -  X}(\mu)}{I_{X}(\mu)}\right)^{\alpha - 1} \right) \leq \frac{\alpha(\alpha-1)\Delta^2}{2\mu} + \min\left( \frac{\alpha(\alpha-1)\Delta^2}{4\mu^2} + \frac{3(\alpha-1)\Delta}{2\mu^2}, \frac{3(\alpha-1)\Delta}{2\mu} \right).\]

\noindent \textbf{Case 5 - $X \in [\Delta, (\alpha+1)\Delta/2]$}

In this case we first note that
\begin{align*}
    \Phi_{X, \alpha, \Delta}(\mu) &= \log \left( \frac{I_{X - \Delta}(\mu)}{I_{X - \alpha \Delta}(\mu)}\left(\frac{I_{X - \Delta}(\mu)}{I_{X}(\mu)}\right)^{\alpha - 1} \right) \\
    &= \log \left( \frac{I_{X - \Delta}(\mu)}{I_{\alpha \Delta - X}(\mu)}\left(\frac{I_{X - \Delta}(\mu)}{I_{X}(\mu)}\right)^{\alpha - 1} \right)
\end{align*}

Next consider the following calculation which corresponds to applying Lemma \ref{lemma:BesselBigBounds} to the above expression and collecting the terms corresponding to $\frac{\Delta^2 + 2\nu \Delta}{2x}$ in the lemma in this context.
\[
\frac{(\alpha-1)(\Delta^2 + 2(X-\Delta)\Delta)}{2} + \frac{((\alpha + 1)\Delta - 2X)^2 + 2(X - \Delta)((\alpha + 1)\Delta - 2X)}{2} = \frac{\alpha(\alpha-1)\Delta^2}{2\mu}\]
Now a direct application of Lemma \ref{lemma:BesselBigBounds} yields
\[\log \left( \frac{I_{X - \Delta}(\mu)}{I_{\alpha \Delta - X}(\mu)}\left(\frac{I_{X - \Delta}(\mu)}{I_{X}(\mu)}\right)^{\alpha - 1} \right) \leq \frac{\alpha(\alpha-1)\Delta^2}{2\mu} + \min\left( \frac{\alpha(\alpha-1)\Delta^2}{4\mu^2} + \frac{(\alpha-1)\Delta}{2\mu^2}, \frac{(\alpha-1)\Delta}{2\mu} \right).\]

\noindent \textbf{Case 6 - $X \in [(\alpha+1)\Delta/2, \alpha\Delta]$}

In this case we first note that
\begin{align*}
    \Phi_{X, \alpha, \Delta}(\mu) &= \log \left( \frac{I_{X - \Delta}(\mu)}{I_{X - \alpha \Delta}(\mu)}\left(\frac{I_{X - \Delta}(\mu)}{I_{X}(\mu)}\right)^{\alpha - 1} \right) \\
    &= \log \left( \frac{I_{X - \Delta}(\mu)}{I_{\alpha \Delta - X}(\mu)}\left(\frac{I_{X - \Delta}(\mu)}{I_{X}(\mu)}\right)^{\alpha - 1} \right)
\end{align*}

Next consider the following calculation which corresponds to applying Lemma \ref{lemma:BesselBigBounds} to the above expression we get that,
\begin{align*}
    \Phi_{X, \alpha, \Delta}(\mu) &\leq (\alpha - 1)*\left(\sum_{j =X - \Delta+1}^{X} \arcsinh(\delta_2(j,\mu))\right) - \sum_{j = \alpha \Delta - X + 1}^{X - \Delta} \arcsinh(\delta_0(j,\mu))  
\end{align*}
We again intend to use the inequality $\arcsinh(y) - \arcsinh(x) \leq y-x$ 
for $y \geq x$. To this end first note that the number of terms on the left summation in the above inequality are at least as many as the number of terms on the RHS (taking the multiplicity via $\alpha - 1$ into account). Therefore for those terms we can apply the above inequality. For the remaining we simply use the inequality $\arcsinh(x) \leq x$. Therefore we get the following simplification, 
\begin{align*}
    \Phi_{X, \alpha, \Delta}(\mu) &\leq (\alpha - 1)*\left(\sum_{j =X - \Delta+1}^{X} \delta_2(j,\mu)\right) - \sum_{j = \alpha \Delta - X + 1}^{X - \Delta} \delta_0(j,\mu) . 
\end{align*}
We can now use the upper and lower bounds on $\delta_2, \delta_0$ given by Lemma \ref{lemma:deltabound}. To this end consider the following calculation,
\begin{align*}
&(\alpha - 1)*\left(\sum_{j =X - \Delta+1}^{X}\frac{j-1/2}{\mu}\right) - \sum_{j = \alpha \Delta - X + 1}^{X - \Delta}\frac{j-1/2}{\mu} \\
   &= \frac{(\alpha-1)(\Delta^2 + 2(X-\Delta)\Delta)}{2} - \frac{(2X - (\alpha + 1)\Delta)^2 + 2(\alpha\Delta - X)((2X - (\alpha + 1)\Delta)}{2}  \\
   &= \frac{\alpha(\alpha-1)\Delta^2}{2\mu}.
\end{align*}
Now a direct application of Lemma \ref{lemma:BesselBigBounds} yields
\[\log \left( \frac{I_{X - \Delta}(\mu)}{I_{\alpha \Delta - X}(\mu)}\left(\frac{I_{X - \Delta}(\mu)}{I_{X}(\mu)}\right)^{\alpha - 1} \right) \leq \frac{\alpha(\alpha-1)\Delta^2}{2\mu} + \min\left( \frac{(2\alpha-1)(\alpha-1)\Delta^2}{4\mu^2} + \frac{(\alpha-1)\Delta}{\mu^2}, \frac{(\alpha-1)\Delta}{\mu} \right).\]

\end{proof}

\section{Additional Results}

\subsection{Shakespeare} \label{sec:shakespeare-supp}

We additionally evaluate our methods on Shakespeare, a public federated language modeling dataset~\cite{mcmahan2017communication, reddi2020adaptive}. The dataset is built on the collective works of William Shakespeare, where each of the total $N = 715$ clients corresponds to a speaking role with at least two lines. The task of this dataset is to predict the next word based on the preceding words in a line. The dataset is split into training set and test set by partitioning the lines from each client. The model used for this task follows that of~\cite{reddi2020adaptive}, to which we refer the reader for more details on the dataset and the experimental setup.

Our hyperparameters mostly follow those from~\cite{andrew2019differentially, reddi2020adaptive} and limited tuning was performed; see Table~\ref{tab:supp-hyperparam} for a complete list of hyperparameters. We do a small grid search over the $\ell_2$ clipping values $c \in \{0.25, 0.5\}$. 
We set the number of clients per round $n=100$ and we train for $T=1200$ rounds. In particular, note that it is challenging to obtain a small $\varepsilon$ on this dataset due to the small $N$ and the minimal $n$ sufficient for convergence; we thus follow~\cite{andrew2019differentially} and~\cite{mcmahan2018learning} to report privacy with a hypothetical population size $N=10^6$.

Figure~\ref{fig:shakes-summary-results} compares the test accuracies (averaged over the last 100 rounds) across different mechanisms and values of $\varepsilon$ and $b$. Figure~\ref{fig:shakes-rounds-results} compares the mechanisms during training. 
The optimal clipping is $c=0.5$ for $\varepsilon=7.5$ and $c=0.25$ otherwise.
Note that Skellam matches DDGauss across all settings.
The slight performance gap from both Skellam and DDGauss to Gaussian is likely due to the effects of modular clipping error from SecAgg (cf. Figure~\ref{fig:supp-sonwp-summary-k3k4}).
\ifnum\arxiv=1
\begin{figure}[h]
\else
\begin{figure}[t]
\fi
    \centering
    \includegraphics[width=0.42\linewidth,bb=0 0 245 182]{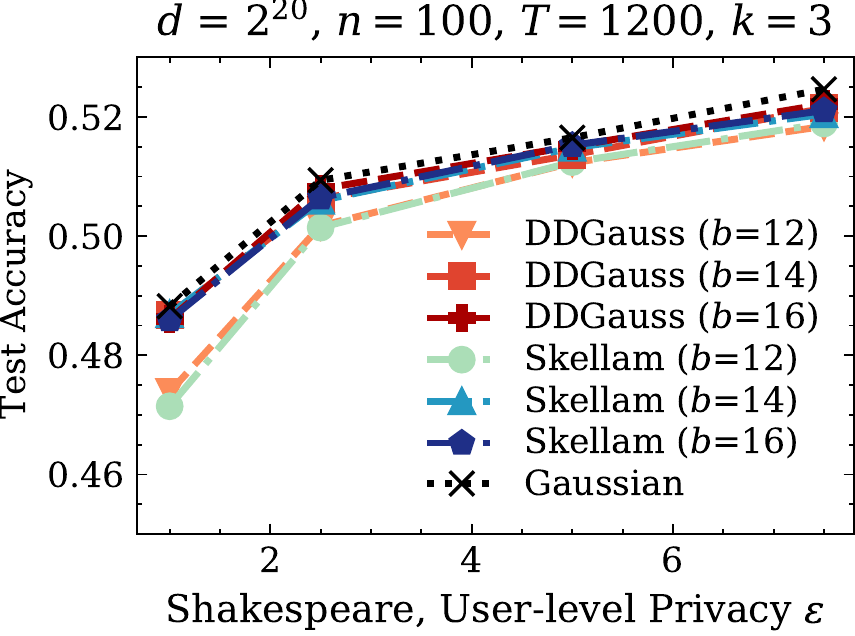}
    \quad
    \includegraphics[width=0.42\linewidth,bb=0 0 245 182]{images/skellam-shakes-summary-k4.pdf}
    \caption{Summary of test accuracies on \textbf{Shakespeare} across different $\varepsilon$ and $b$ averaged over the last 100 rounds with a hypothetical population size $N=10^6$. $\delta = 10^{-6}$. \textbf{Left:} $k=3$. \textbf{Right:} $k=4$.}
    \vspace{-0.5em}
    \label{fig:shakes-summary-results}
\ifnum\arxiv=1  %
\end{figure}
\else
\end{figure}
\fi
\begin{figure}[t]
    \centering
    \includegraphics[width=0.8\linewidth,bb=0 0 498 371]{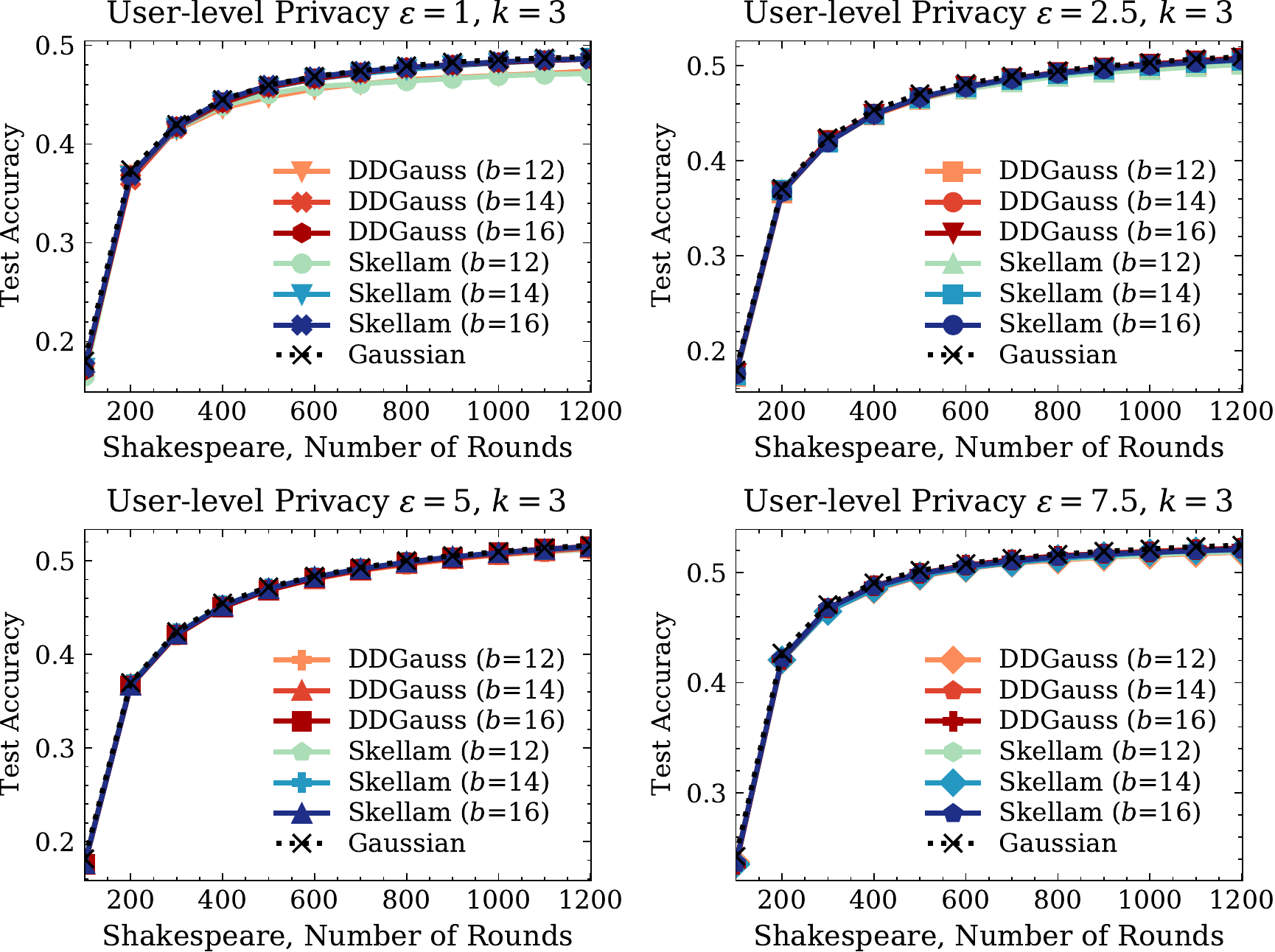}
    \caption{Test accuracies on \textbf{Shakespeare} over training $T=1200$ rounds (averaged every 100 rounds) with a hypothetical population size $N=10^6$. $\delta = 10^{-6}$. $k=3$.}
    \label{fig:shakes-rounds-results}
\end{figure}

\subsection{Federated EMNIST}

Figure~\ref{fig:supp-emnist-summary-k3k4} includes the omitted details of Figure~\ref{fig:fl-results} (left) and illustrates the effect of a larger $k$.
Figure~\ref{fig:emnist-rounds-results} compares the test accuracies of different methods on EMNIST over training $T=1500$ rounds for different values of $\varepsilon$ and $b$ with $k=3$. 

\begin{figure}[t]
    \centering
    \includegraphics[width=0.42\linewidth,bb=0 0 245 182]{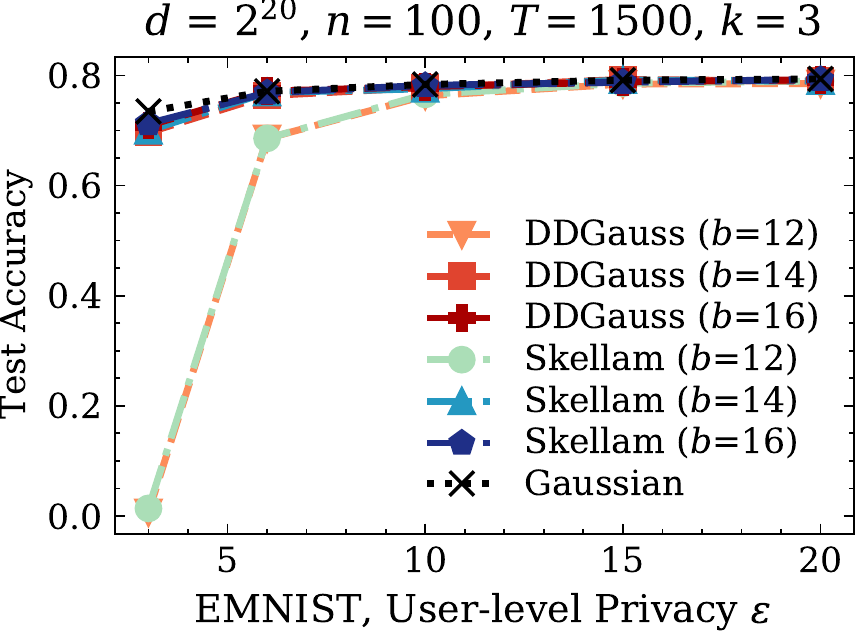}
    \quad
    \includegraphics[width=0.42\linewidth,bb=0 0 245 182]{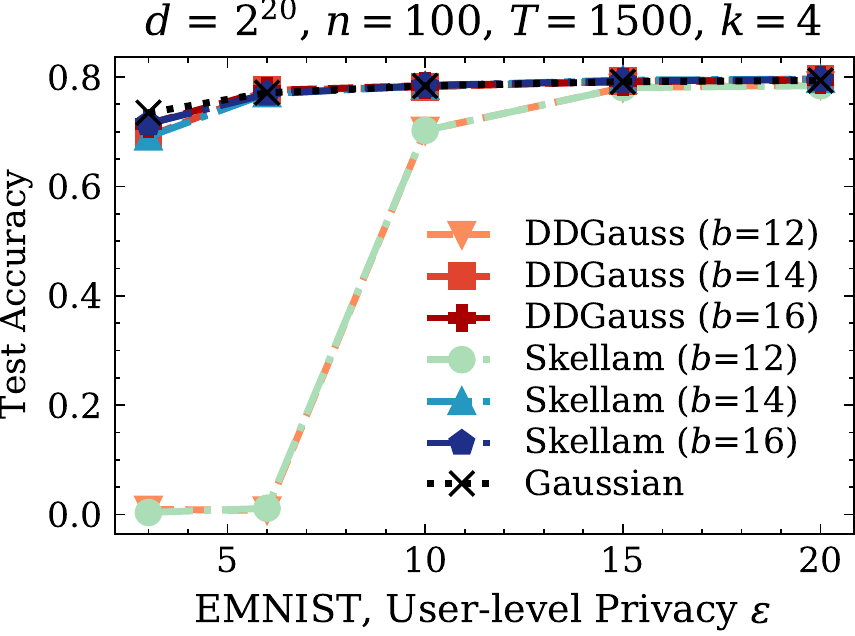}
    \caption{Summary of test accuracies on \textbf{Federated EMNIST} across different $\varepsilon$ and $b$ averaged over the last 100 rounds. $\delta = 1/N$. \textbf{Left:} $k=3$. \textbf{Right:} $k=4$.}
    \label{fig:supp-emnist-summary-k3k4}
\end{figure}
\begin{figure}[t]
    \centering
    \includegraphics[width=0.85\linewidth,bb=0 0 498 371]{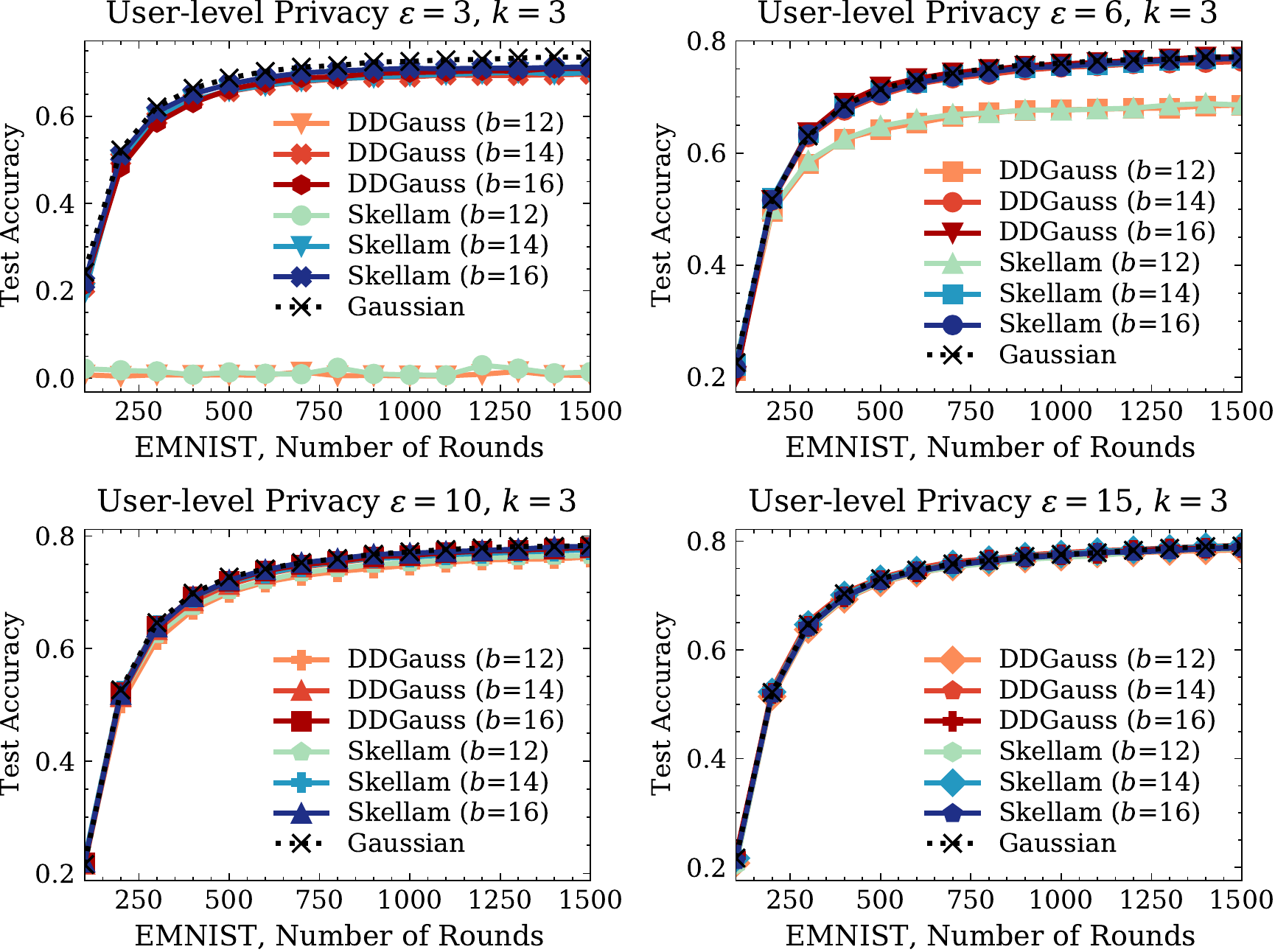}
    \caption{Test accuracies on \textbf{Federated EMNIST} over training $T=1500$ rounds (averaged every 100 rounds). $\delta = 1/N$. $k=3$.}
    \label{fig:emnist-rounds-results}
\end{figure}

\subsection{Stack Overflow Next Word Prediction}

Figure~\ref{fig:supp-sonwp-summary-k3k4} illustrates the effect of $k$ on the test accuracies. Note that for $k=3$, there is a slight performance gap between Gaussian and Skellam/DDGauss likely due to the modular clipping error introduced by SecAgg. By increasing $k$, we can reduce scaling and sacrifice quantization errors to close the gap.
Figure~\ref{fig:sonwp-rounds-results} additionally shows the accuracies on the validation set over training $T=1600$ rounds for different values of $\varepsilon$ and $b$ with $k=4$.

\begin{figure}[t]
    \centering
    \includegraphics[width=0.42\linewidth,bb=0 0 245 182]{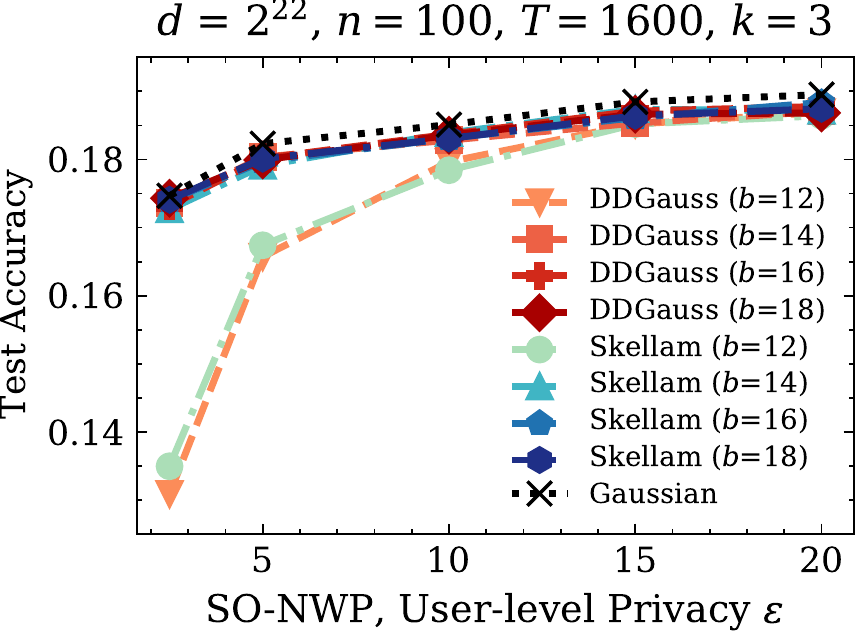}
    \quad
    \includegraphics[width=0.42\linewidth,bb=0 0 245 182]{images/skellam-sonwp-summary-k4.pdf}
    \caption{Summary of test accuracies on \textbf{Stack Overflow NWP} across different $\varepsilon$ and $b$. $\delta = 10^{-6}$. \textbf{Left:} $k=3$. \textbf{Right:} $k=4$.}
    \label{fig:supp-sonwp-summary-k3k4}
\end{figure}
\begin{figure}[t]
    \centering
    \includegraphics[width=0.85\linewidth,bb=0 0 498 371]{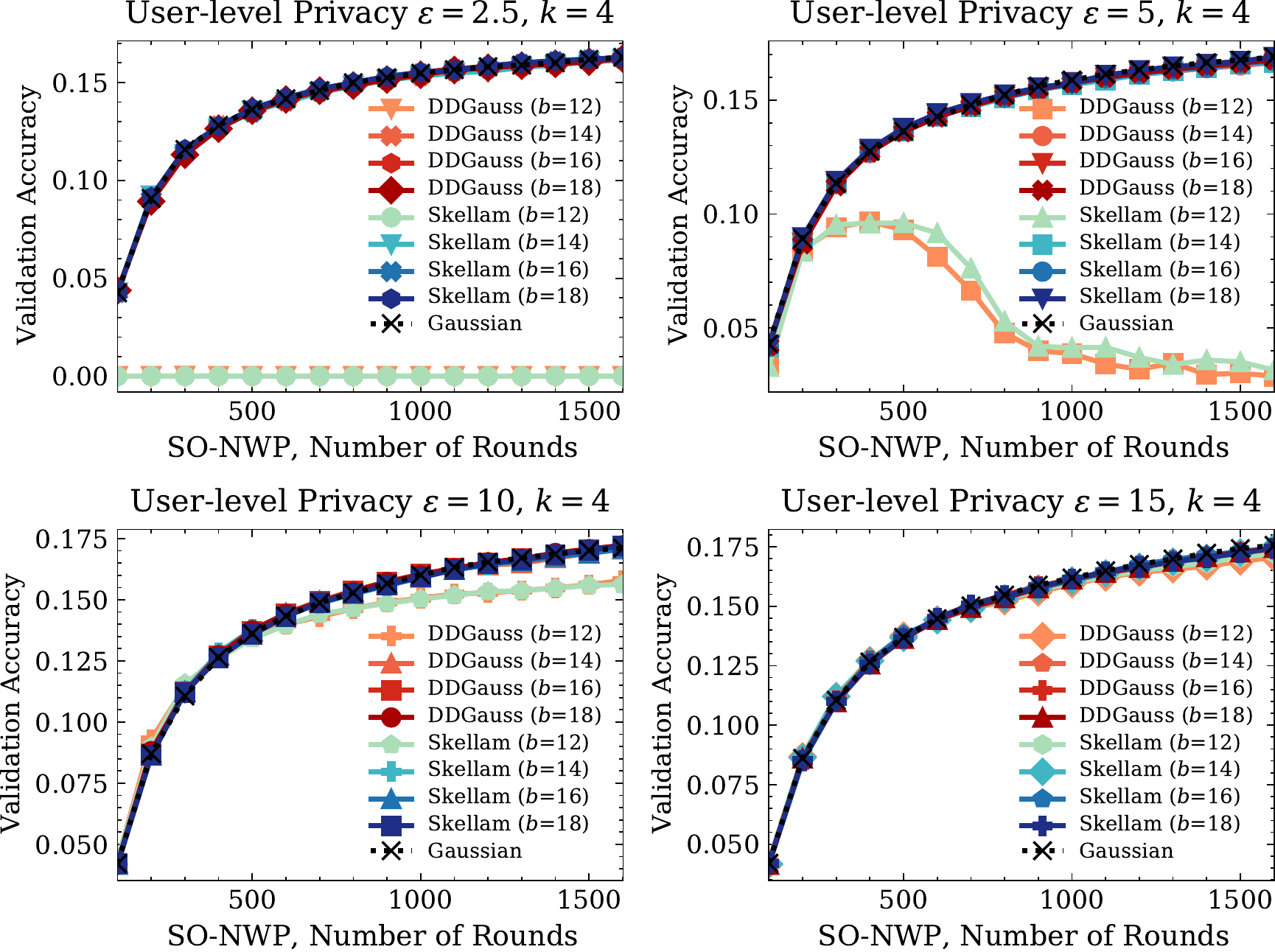}
    \caption{Validation accuracies on \textbf{Stack Overflow NWP} over training $T=1600$ rounds (averaged every 100 rounds). $\delta = 10^{-6}$. $k=4$.}
    \label{fig:sonwp-rounds-results}
\end{figure}

\subsection{Comparison against Discrete Laplace}

Figure~\ref{fig:supp-composition-comparison} compares Skellam and (discrete) Gaussian against the discrete Laplace mechanism under various accounting schemes on privacy compositions.

\begin{figure}[t]
    \centering
    \includegraphics[width=0.75\linewidth,bb=0 0 338 187]{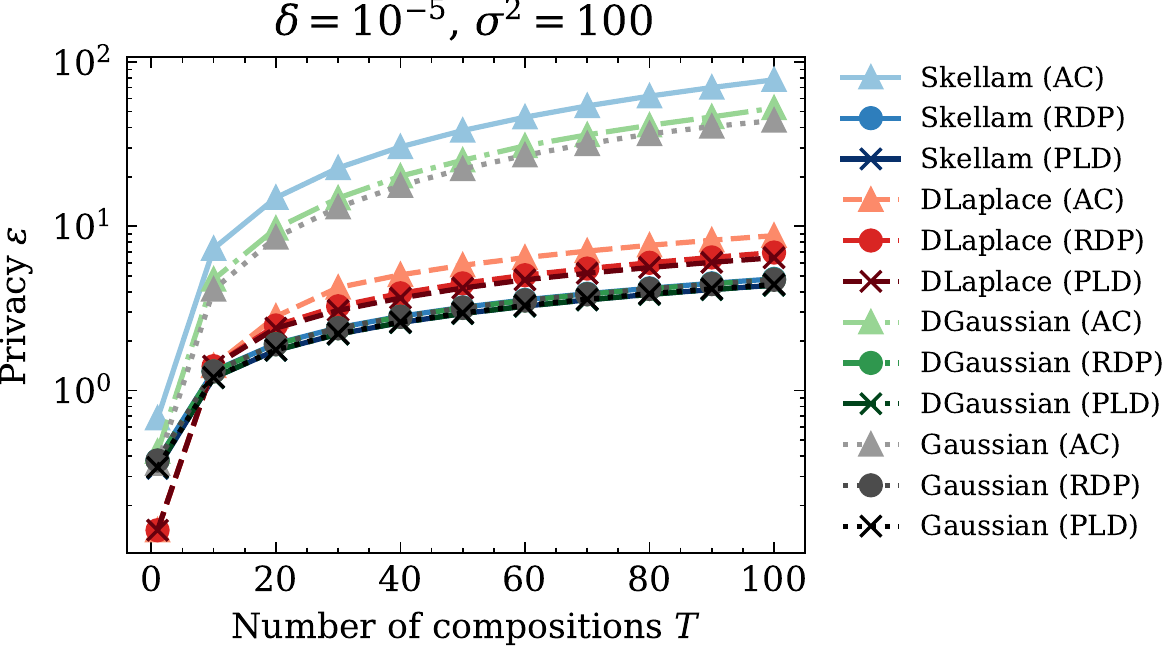}
    \caption{Comparing privacy compositions across various mechanisms in the scalar case. $\Delta = 1$. DLaplace: Discrete Laplace. DGaussian: Discrete Gaussian. AC: Advanced Composition. RDP: R\'enyi DP. PLD: Privacy Loss Distributions.}
    \label{fig:supp-composition-comparison}
\end{figure}

\section{Additional Details}

\subsection{Models for FL experiments}

For Federated EMNIST, we use a small convolutional network similar to the architecture used in~\cite{reddi2020adaptive}. Our architecture is slightly smaller and has $\bar d < 2^{20}$ parameters to reduce the zero padding required by the randomized Hadamard transform (see Algorithm~\ref{alg:agg-procedure}). The architecture is summarized in Table~\ref{tab:emnist-model}. For Stack Overflow NWP and Shakespeare (Section~\ref{sec:shakespeare-supp}), we use the architectures from~\cite{reddi2020adaptive} directly.\footnote{\url{https://github.com/google-research/federated/tree/master/utils/models}}

\begin{table}[t]
\centering
\begin{tabular}{lll}
\hline 
Layer & Output Shape & \# Params \\ \hline
Input & $28\times28\times1$ & 0 \\
Conv $3\times3$, 32 & $26\times26\times32$ & 320 \\
MaxPool & $13\times13\times32$ & 0 \\
Conv $3\times3$, 64 & $11\times11\times64$ & 18496 \\
Dropout, 25\% & $11\times11\times64$ & 0 \\
Flatten & 7744 & 0 \\
Dense & 128 & 991360 \\
Dropout, 25\% & 128 & 0 \\
Dense & 62 & 7998 \\ \hline
\multicolumn{2}{l}{Total \# Params} & 1018174 \\ 
\hline
\end{tabular}
\vspace{0.5em}
\caption{Model architecture for Federated EMNIST.}
\label{tab:emnist-model}
\end{table}

\subsection{Datasets}

\paragraph{License, Availability, and Curation}
The federated datasets used in our FL experiments (Federated EMNIST~\cite{caldas2018leaf}, Stack Overflow Next Word Prediction~\cite{tffauthors2019}, and Shakespeare~\cite{mcmahan2017communication}) are publicly available\footnote{\url{https://www.tensorflow.org/federated/api_docs/python/tff/simulation/datasets}} from TensorFlow Federated~\cite{ingerman2019tff}. To our knowledge, they have been appropriately anonymized and do not contain personally identifiable information. Federated EMNIST and Shakespeare is licensed under the BSD 2-Clause License, while SO-NWP uses the CC BY-SA 3.0 License. 

\paragraph{Dataset Splits}
For all FL datasets, we used the standard dataset split provided by TensorFlow. For EMNIST and Shakespeare, the datasets are split into training set and test set, and performance is reported on the test set. For Stack Overflow NWP, the dataset is split into training, validation, and test sets; the summary plots (e.g. Figure~\ref{fig:supp-sonwp-summary-k3k4}) report performance on the test set and the validation plots (e.g. Figure~\ref{fig:sonwp-rounds-results}) report performance on the validation set.
While other validation metrics/methods are possible, we note that using the dataset splits available from TensorFlow is standard practice in recent work (e.g.~\cite{reddi2020adaptive,ddg,mcmahan2018learning,al-shedivat2021federated}) and it allows our methods to be comparable in similar settings. Note also that validation data are often unavailable for training real-world FL models, and techniques such as $k$-fold validation can incur additional privacy costs.

\subsection{Other Implementation Details}

\paragraph{Privacy Amplification via Sub-sampling}
The privacy guarantees for FL experiments leverage amplification via sub-sampling as a subset of the clients is selected in each of the $T$ rounds to participate in training (we use~\cite{mironov2019r} for Gaussian and \cite{zhu2019poission} for Skellam and DDGauss). We note that the underlying assumptions -- that the clients can be sampled uniformly, and that the identities of the clients can be hidden from the sampler -- do not always hold, particularly in federated learning settings where the availability of the clients can be different at each round and the central server is the entity initiating the training/sampling procedure. 
However, we note that the privacy guarantees is applicable for external analysts that request the trained model from the central aggregator.

\paragraph{Random seeds}
For all experiments, we fixed both the seed for the client dataset randomness (sampling, shuffling, etc.) and the seed of initializing the parameters of the model architectures. Note that the FL experiments are not repeated for multiple seeds due to computational costs, though the test accuracies on EMNIST/Shakespeare and the validation accuracies on SO-NWP are averaged over the last 100 rounds.

\paragraph{Hyperparameters}
Table~\ref{tab:supp-hyperparam} summarizes the hyperparameters used for the FL experiments. We adopt most hyperparameters from previous work and tuning is limited. We follow~\cite{ddg} for EMNIST, \cite{andrew2019differentially} for Shakespeare, and \cite{kairouz2021practical} for Stack Overflow NWP.
We sweep different server learning rates for SO-NWP and different clipping thresholds $c$ for Shakespeare and report the best results.
We also show the effect of $k$ on the trade-off between modular clipping error and quantization error (e.g. Figure~\ref{fig:supp-sonwp-summary-k3k4}), though the same values of $k$ is used when comparing different methods.

\begin{table}[t]
\centering
\begin{tabular}{r|ccc}
\hline
 & EMNIST & Shakespeare & Stack Overflow NWP \\ \hline 
Client LR $\eta_\text{client}$ & 0.32 & 1 & 0.5 \\
Server LR $\eta_\text{server}$ & 1 & 0.32 & \{0.3, 1\} \\
Server momentum & 0.9 & 0.9 & 0.9 \\
Client Batch Size & 20 & 4 & 16 \\
Client epochs per round & 1 & 1 & 1 \\
Max examples per client & - & - & 256 \\
Clients uniform weighting & $\checkmark$ & $\checkmark$ & $\checkmark$ \\
$\ell_2$ clipping $c$ & 0.03 & \{0.25, 0.5\} & 0.3 \\
Clients per round $n$ & 100 & 100 & 100 \\
Population size $N$ & 3400 & 715\textsuperscript{\ddag} & 342477 \\
Training rounds $T$ & 1500 & 1200 & 1600 \\
Conditional rounding bias\textsuperscript{\dag} $\beta$ & $\exp(-0.5)$ & $\exp(-0.5)$ & $\exp(-0.5)$ \\
Privacy $\delta$ & $1/N$ & $10^{-6}$ & $10^{-6}$ \\
\hline
\end{tabular}
\vspace{0.5em}
\caption{Summary of hyperparameters for the FL experiments. \textsuperscript{\dag}Discrete mechanisms only. \textsuperscript{\ddag}A hypothetical population size $N=10^6$ was used when reporting privacy guarantees.}
\label{tab:supp-hyperparam}
\end{table}

\section{Practical Remarks} \label{sec:appendix-practical-remarks}

\subsection{Ease of Sampling}

One of the practical advantages of the Skellam mechanism compared to the (distributed) discrete Gaussian mechanism is that the sampling routines for Skellam (Poisson) distribution are widely available in data analysis and machine learning software packages such as NumPy, TensorFlow, and PyTorch that ML / DP practitioners would use for development.
However, we’d like to point out that discrete Gaussian sampling itself has been well explored in the lattice-based cryptography community (e.g.,~\cite{roy2013high, dwarakanath2014sampling, rossi2019simple}) and we would expect that the efficient sampling routines (often implemented in C/C++, e.g.~\cite{dgs}) would perform as good as optimized Poisson samplers if compared under a similar setting.
While it is perceivable that discrete Gaussian samplers will become more accessible for ML and DP practitioners in the future, the discrete Gaussian has only recently been introduced~\cite{CanonneKS20} in the differential privacy context and thus the Skellam mechanism provides a practical advantage as differentially private FL systems with compression and scalable secure aggregation are near deployment today.
We also provide a preliminary comparison of the sampling time against two existing implementations of discrete Gaussian sampler~\cite{CanonneKS20,ddg} available to DP practitioners; the implementation from \cite{CanonneKS20} is not vectorized and thus can be 1000$\times$ slower than Skellam sampling, and the implementation from \cite{ddg} is still up to 40\% slower in TensorFlow eager execution.

\subsection{Closure Under Summation}

The property that Skellam samples are closed under summation gives several practical advantages. 

From a theoretical standpoint, being closed under summation removes the need for accounting for the divergence errors, which, as measured in DDGauss~\cite{ddg} as an infinity divergence, grow with the number of clients (as opposed to no such dependence for Skellam), which can be problematic with settings of massively distributed client base (e.g. federated analytics).

Concretely, consider the simple quantile estimation problem~\cite{andrew2019differentially} where we have a large number of clients ($n\ge 1000$) and sensitivity-1 (binary) queries. With a large central noise standard deviation $\sigma_c \ge 5$, we can achieve a strong privacy (as $\varepsilon \approx \Delta / \sigma_c$), but if this noise is to be added locally (via the distributed DP with SecAgg model), the $\tau$ term of DDGauss (Thm 1 of~\cite{ddg}) would significantly degrade the privacy guarantees because large $n$ means more additive divergence terms and smaller local noise standard deviation which (exponentially) widens the divergence -- this effect can be inferred from the left of Figure~\ref{fig:accounting-scalars} where DDGauss underperforms at small noise levels and, in comparison, Skellam degrades more gracefully with smaller noises. For a specific example, consider $n=10000$ and $\sigma_c = 50$; in this case, the local noise standard deviation $\sigma_l = 0.5$, and at $\alpha=2$, the RDP $\varepsilon(\alpha)$ of Gaussian and Skellam is $4\times 10^{-4}$ and $4.0036 \times 10^{-4}$ respectively, while the RDP of DDGauss is $> 723$ (a factor of $>10^6$) due to the sum divergence term $\tau$. While scaling both the raw values and the noise variances can help alleviate this issue, it also introduces additional communication costs. 

From an engineering perspective, while the divergence term for discrete Gaussian is usually small enough, we still need to keep track of it and its dependent parameters (number of clients, client dimensions, variance) for privacy accounting. Skellam on the other hand only requires us to track the variance and thereby allows easier switching between central DP (noise on the server) and distributed DP (noise on the clients) implementations.

\end{document}